\documentclass{article}

\PassOptionsToPackage{numbers, sort&compress}{natbib}

\usepackage[preprint]{neurips_2025}

\usepackage{paralist} %
\usepackage{enumitem}
\setlist{noitemsep,topsep=0pt,parsep=0pt,partopsep=0pt, leftmargin=1.5em}

\usepackage[utf8]{inputenc} %
\usepackage[T1]{fontenc}    %
\usepackage{booktabs}       %
\usepackage{amsfonts}       %
\usepackage{nicefrac}       %
\usepackage{microtype}      %
\usepackage{subcaption} 
\captionsetup[sub]{font=footnotesize}  %
\usepackage{bbm}

\usepackage{multibib}

\usepackage{titletoc}

\usepackage[dvipsnames]{xcolor}      
\usepackage[utf8]{inputenc}
\usepackage[T1]{fontenc}    %
\usepackage[colorlinks=true, linkcolor=BrickRed, urlcolor=blue, citecolor=Blue, anchorcolor=blue, backref=page]{hyperref}
\renewcommand*{\backref}[1]{}
\renewcommand*{\backrefalt}[4]{%
    \ifcase #1%
          \or [Cited on page~#2.]%
          \else [Cited on pages~#2.]%
    \fi%
    }
\usepackage{amsmath}
\usepackage{amsthm}
\usepackage{amssymb}
\usepackage{bm}
\usepackage{url}            %
\usepackage{booktabs}
\usepackage{amsfonts} 
\usepackage{comment}
\usepackage{tikz}
\usetikzlibrary{shapes.geometric, arrows, shadows, bayesnet}
\usepackage{subcaption}
\usepackage{xcolor}

\usepackage{tabularx}

\numberwithin{equation}{section}
\renewcommand{\mathbf}[1]{{\bm{#1}}}
\newcommand{\ab}{\mathbf{a}}

\newcommand{\fb}{\mathbf{f}}
\newcommand{\gb}{\mathbf{g}}
\newcommand{\hb}{\mathbf{h}}
\newcommand{\ib}{\mathbf{i}}
\newcommand{\jb}{\mathbf{j}}

\newcommand{\ub}{\mathbf{u}}

\newcommand{\xb}{\mathbf{x}}

\newcommand{\Ub}{\mathbf{U}}

\newcommand{\Wb}{\mathbf{W}}
\newcommand{\Xb}{\mathbf{X}}

\newcommand{\Zb}{\mathbf{Z}}

\newcommand{\Fcal}{\mathcal{F}}
\newcommand{\Gcal}{\mathcal{G}}
\newcommand{\Hcal}{\mathcal{H}}
\newcommand{\Ical}{\mathcal{I}}
\newcommand{\Jcal}{\mathcal{J}}

\newcommand{\Lcal}{\mathcal{L}}
\newcommand{\Mcal}{\mathcal{M}}
\newcommand{\Ncal}{\mathcal{N}}

\newcommand{\Xcal}{\mathcal{X}}

\newcommand{\Zcal}{\mathcal{Z}}

\newcommand{\EE}{\mathbb{E}} %
\newcommand{\RR}{\mathbb{R}} %

\newcommand*{\gammab}{\bm{\gamma}}

\newcommand*{\mub}{\bm{\mu}}

\usepackage{amsthm,thmtools,thm-restate}
\usepackage{cleveref}
\RequirePackage{amsmath}
\RequirePackage{amssymb}
\RequirePackage{mathtools}
\ifx\proof\undefined 
\RequirePackage{amsthm}
\fi
\crefname{figure}{Fig.}{Figs.}
\crefname{definition}{Defn.}{Defns.}
\crefname{corollary}{Cor.}{Cors.}
\crefname{proposition}{Prop.}{Props.}
\crefname{theorem}{Thm.}{Thms.}
\crefname{remark}{Remark}{Remarks}
\crefname{principle}{Principle}{Principles}
\crefname{lemma}{Lemma}{Lemmata}
\crefname{claim}{Claim}{Claims}
\crefname{table}{Tab.}{Tabs.}
\crefname{section}{Sec.}{Secs.}
\crefname{subsection}{Sec.}{Secs.}
\crefname{subsubsection}{Sec.}{Secs.}
\crefname{assumption}{Asm.}{Asms.}
\crefname{appendix}{App.}{App.}
\crefname{equation}{Eq.}{Eqs.}
\crefname{example}{Example}{Examples}

\ifx\BlackBox\undefined
\newcommand{\BlackBox}{\rule{1.5ex}{1.5ex}}  %
\fi

\ifx\QED\undefined
\def\QED{~\rule[-1pt]{5pt}{5pt}\par\medskip}
\fi

\ifx\proof\undefined
\newenvironment{proof}{\par\noindent{\bf Proof\ }}{\hfill\BlackBox\\[2mm]}
\fi
\ifx\proofsketch\undefined

\fi

\theoremstyle{plain} %
\ifx\theorem\undefined
\newtheorem{theorem}{Theorem}
\numberwithin{theorem}{section}
\fi
\ifx\property\undefined

\fi
\ifx\corollary\undefined

\fi
\ifx\lemma\undefined
\newtheorem{lemma}[theorem]{Lemma}
\fi
\ifx\proposition\undefined

\fi
\ifx\assum\undefined

\fi

\theoremstyle{definition} %
\ifx\definition\undefined
\newtheorem{definition}[theorem]{Definition}
\fi
\ifx\assum\undefined

\fi

\theoremstyle{remark} %
\ifx\remark\undefined

\fi
\ifx\example\undefined

\fi
\ifx\lemma\undefined

\fi
\ifx\conjecture\undefined
\newtheorem{conjecture}[theorem]{Conjecture}
\fi
\ifx\fact\undefined

\fi
\ifx\claim\undefined

\fi
\ifx\assum\undefined

\fi

\delimitershortfall-1sp

\newcommand{\y}{y}
\newcommand{\x}{\boldsymbol{x}}

\usepackage{tikz}
\usepackage{pgfplots}
\pgfplotsset{compat=1.17}
\usetikzlibrary{3d}
\usetikzlibrary{arrows.meta, positioning, shapes.misc, shapes.geometric, calc}

\usepackage[disable]{todonotes}

\makeatletter
\renewcommand{\@listi}{%
  \leftmargin=2pc %
  \labelwidth\leftmargin
  \advance\labelwidth-\labelsep
  \topsep=4\p@ \@plus 1\p@ \@minus 2\p@
  \partopsep=1\p@ \@plus 0.5\p@ \@minus 0.5\p@
  \itemsep=2\p@ \@plus 1\p@ \@minus 0.5\p@
  \parsep=2\p@ \@plus 1\p@ \@minus 0.5\p@
}
\makeatother

\title{%
Learning Nonlinear Causal Reductions to Explain Reinforcement Learning Policies
}

\author{
Armin Keki\'c\, $^{1}$
\And Jan Schneider\, $^{1}$
\And Dieter Büchler\, $^{1, 2}$
\AND Bernhard Schölkopf\thanks{Joint supervision.}\, $^{1, 3, 4}$
\quad \quad Michel Besserve\footnotemark[1]\, $^{1, 5}$\\[.75em ] %
$^1$ Max Planck Institute for Intelligent Systems, T\"ubingen, Germany\\
\quad 
$^2$ University of Alberta, Canada \\
$^3$ Tübingen AI Center, T\"ubingen, Germany \\
$^4$ ELLIS Institute, T\"ubingen, Germany \\
$^5$ Technische Universität Braunschweig, Germany \\[.25em]
\texttt{\{armin.kekic,jan.schneider,dieter.buechler\}@tue.mpg.de}\\
\texttt{\{besserve,bs\}@tue.mpg.de}
}

\begin{document}

\maketitle

\begin{abstract}
\looseness-1
Why do reinforcement learning (RL) policies fail or succeed?
This is a challenging question due to the complex, high-dimensional nature of agent-environment interactions.
In this work, we take a causal perspective on explaining the behavior of RL policies by viewing the states, actions, and rewards as variables in a low-level causal model.
We introduce random perturbations to policy actions during execution and observe their effects on the cumulative reward, learning a simplified high-level causal model that explains these relationships.
To this end, we develop a nonlinear Causal Model Reduction framework that ensures approximate interventional consistency, meaning the simplified high-level model responds to interventions in a similar way as the original complex system.
We prove that for a class of nonlinear causal models, there exists a unique solution that achieves exact interventional consistency, ensuring learned explanations reflect meaningful causal patterns.
Experiments on both synthetic causal models and practical RL tasks~-~including pendulum control and robot table tennis~-~demonstrate that our approach can uncover important behavioral patterns, biases, and failure modes in trained RL policies.%
\end{abstract}

\section{Introduction}
\label{sec:introduction}

\looseness-1
In recent years, reinforcement learning (RL) has demonstrated remarkable successes in diverse domains, from achieving superhuman performance in games like Go~\citep{silver2016mastering} and Atari~\citep{mnih2013playing}, to enabling sophisticated control in robotics~\citep{kober2013reinforcement} and optimizing resource management in computer networks~\citep{mao2016resource}.
The field has also seen significant adoption in real-world applications, including autonomous systems~\citep{bojarski2016end}, recommendation engines~\citep{afsar2022reinforcement}, and process optimization~\citep{nian2020review}.
As RL systems continue to be deployed in increasingly consequential settings, understanding the behavior and decision-making processes of trained policies becomes a practical necessity for ensuring reliability, safety, and trust.
Hence, a natural and critical question arises: \textit{``Why did a policy fail or succeed?''}
Practitioners need robust explanations when and why policies exhibit unexpected behaviors. %
Identifying specific failure modes can guide more efficient training regimes and enable improvements to policy architecture or learning algorithms.
\par 
Despite their impressive capabilities, understanding the behavior of trained RL policies is challenging.
These policies often use parameter-rich neural networks to map from complex observation spaces to actions through mechanisms that are not easily accessible to human intuition.
Standard performance metrics, such as cumulative reward, provide only limited insight into the behavior of an RL agent.
Moreover, the credit assignment problem~-~determining which specific actions contributed most significantly to eventual outcomes~-~is a fundamental obstacle to developing policy explanations.
\begin{figure}[ht]
    \centering
    \begin{subfigure}[b]{0.5\textwidth}
        \centering
        \tikzstyle{block} = [rectangle, draw, 
    text width=4.5em, text centered, rounded corners, minimum height=2.5em, font=\footnotesize]
    
\tikzstyle{line} = [draw, -latex]
\tikzstyle{var} = [circle, fill=lightgray!40, draw=lightgray, thin, inner sep=0.0em, outer sep=0em, minimum size=1em, font=\scriptsize]
\tikzstyle{intvar} = [rectangle, fill=blue!15, draw=blue!50, thin, inner sep=0.1em, outer sep=0em, text width=1.4em, minimum size=1.3em, text centered, font=\scriptsize]
\tikzset{smallarrow/.style={-{latex[scale=0.5]}, lightgray}}
\tikzset{bigarrow/.style={-{latex[scale=1]}, black}}
\tikzstyle{groupbox} = [rectangle, draw, rounded corners=0.3em, inner sep=0.8em]
\tikzstyle{lowlevelbox} = [fill=gray!15, rounded corners=0.3em, inner sep=1.2em, draw=lightgray]
\tikzstyle{highlevelbox} = [fill=red!15, rounded corners=0.3em, inner sep=1.2em, draw=red!50]
\tikzstyle{whitebackground} = [fill=white, rounded corners=0.3em, inner sep=0.8em]

\def\diagramYShift{-10em}         %
\def\diagramXShift{16em}          %

\begin{tikzpicture}[font=\scriptsize]

\begin{scope}
    \node [block] (Agent2) {Agent};
    \node [block, below of=Agent2, node distance=2.5em, xshift=5.5em] (Intervention) {Intervention};
    \node [block, below of=Agent2, node distance=5em] (Environment2) {Env};

     \path [line] (Agent2.0) -| node [right, pos=0.5]{$A_t$} (Intervention.90);
     \path [line] (Intervention.270) |- node [right, pos=0.4]{$A_t + $ {\setlength{\fboxrule}{0.05em}\setlength{\fboxsep}{0.3em}\fcolorbox{blue!50}{blue!15}{$\delta A_t$}}} (Environment2.0);

     \path [line] (Environment2.190) --++ (-3em,0em) |- node [left, pos=0.25] {$S_{t+1}$} (Agent2.170);
     \path [line] (Environment2.170) --++ (-2em,0em) |- node [near start, right] {$R_{t+1}$} (Agent2.190);
\end{scope}

\def\leftedge{-1.14em}              %
\def\rightedge{6.70em}             %
\def\highLevelShift{9.69em}         %

\def\xPosZero{0em}                 %
\def\xPosOne{1.85em}             %
\def\xPosTwo{3.71em}              %
\def\xPosDots{3.71em}            %
\def\xPosT{5.56em}                 %

\def\yPosIntervention{0em}         %
\def\yPosAction{-3.05em}       %
\def\yPosState{-4.99em}        %
\def\yPosReward{-8.62em}          %

\def\yPosInterventionBoxTop{2.00em}  %
\def\yPosInterventionBoxBottom{-1.14em} %
\def\yPosActionStateBoxTop{-1.71em}  %
\def\yPosActionStateBoxBottom{-6.70em} %
\def\yPosRewardBoxTop{-7.27em}     %
\def\yPosRewardBoxBottom{-10.26em}   %

\def\yPosDashedBoxTop{2.85em}        %
\def\yPosDashedBoxBottom{-11.83em}    %

\def\xPosSummary{\rightedge + 0.0em} %
\def\yPosSummaryX{\yPosAction - 1.43em} %

\def\highLevelBoxLeft{\rightedge+\highLevelShift-2.28em}  %
\def\highLevelBoxRight{\rightedge+\highLevelShift+2.28em}  %

\begin{scope}[xshift=-6em, yshift=\diagramYShift]
    
    \path[lowlevelbox] (\leftedge-0.86em,\yPosDashedBoxTop) rectangle (\rightedge+4.56em,\yPosDashedBoxBottom);
    \path[highlevelbox] (\highLevelBoxLeft,\yPosDashedBoxTop) rectangle (\highLevelBoxRight,\yPosDashedBoxBottom);

    \fill[whitebackground] (\leftedge,\yPosInterventionBoxTop) rectangle (\rightedge,\yPosInterventionBoxBottom);
    \fill[whitebackground] (\leftedge,\yPosActionStateBoxTop) rectangle (\rightedge,\yPosActionStateBoxBottom);
    \fill[whitebackground] (\leftedge,\yPosRewardBoxTop) rectangle (\rightedge,\yPosRewardBoxBottom);

    \node[intvar] (dA0) at (\xPosZero,\yPosIntervention) {$\delta\!A_0$};
    \node[intvar] (dA1) at (\xPosOne,\yPosIntervention) {$\delta\!A_1$};
    \node[font=\footnotesize] (dAdots) at (\xPosDots,\yPosIntervention) {$\cdots$};
    \node[intvar] (dAT) at (\xPosT,\yPosIntervention) {$\delta\!A_T$};
    
    \node[var] (A0) at (\xPosZero,\yPosAction) {$A_0$};
    \node[var] (A1) at (\xPosOne,\yPosAction) {$A_1$};
    \node[font=\footnotesize] (Adots) at (\xPosDots,\yPosAction) {$\cdots$};
    \node[var] (AT) at (\xPosT,\yPosAction) {$A_T$};
    
    \node[var] (S0) at (\xPosZero,\yPosState) {$S_0$};
    \node[var] (S1) at (\xPosOne,\yPosState) {$S_1$};
    \node[font=\footnotesize] (Sdots) at (\xPosDots,\yPosState) {$\cdots$};
    \node[var] (ST) at (\xPosT,\yPosState) {$S_T$};
    
    \node[var] (R0) at (\xPosZero,\yPosReward) {$R_0$};
    \node[var] (R1) at (\xPosOne,\yPosReward) {$R_1$};
    \node[font=\footnotesize] (Rdots) at (\xPosDots,\yPosReward) {$\cdots$};
    \node[var] (RT) at (\xPosT,\yPosReward) {$R_T$};
    
    \draw[smallarrow] (dA0) -- (A0);
    \draw[smallarrow] (dA1) -- (A1);
    \draw[smallarrow] (dAT) -- (AT);
    
    \draw[smallarrow] (A0) -- (S1);
    \draw[smallarrow] (A1) -- (Sdots);
    
    \draw[smallarrow] (S0) -- (A0);
    \draw[smallarrow] (S1) -- (A1);
    \draw[smallarrow] (ST) -- (AT);
    
    \draw[smallarrow] (S0) -- (R0);
    \draw[smallarrow] (S1) -- (R1);
    \draw[smallarrow] (ST) -- (RT);

    \draw[smallarrow] (R0) -- (A1);
    \draw[smallarrow] (R1) -- (Adots);
    \draw[smallarrow] (Rdots) -- (AT);
    
    \draw[smallarrow] (S0) to[out=-15,in=195] (S1);
    \draw[smallarrow] (S1) to[out=-15,in=195] ($(Sdots.west)!0.4!(Sdots.center)$);

    \node[font=\large, anchor=west] (I) at (\xPosSummary, \yPosIntervention) {$=\mathbf{I}_{\pi(1)}$};
    \node[font=\large, anchor=west] (X) at (\xPosSummary, \yPosSummaryX) {$=\mathbf{X}_{\pi(1)}$};
    \node[font=\large, anchor=west] (R) at (\xPosSummary, \yPosReward) {$=\mathbf{R}$};

    \node[rectangle, font=\large, anchor=center, fill=blue!15, inner sep=0.2em, minimum width=2.5em, minimum height=2.0em, draw=blue!50, thin] (J1) at (\rightedge + \highLevelShift, \yPosIntervention) {$J$};
    \node[circle, font=\large, anchor=center, fill=lightgray!40, inner sep=0.2em, minimum size=2.5em, draw=lightgray, thin] (Z1) at (\rightedge + \highLevelShift, \yPosSummaryX) {$Z$};
    \node[circle, font=\large, anchor=center, fill=lightgray!40, inner sep=0.2em, minimum size=2.5em, draw=lightgray, thin] (Y) at (\rightedge + \highLevelShift, \yPosReward) {$Y$};
    
    \draw[bigarrow] (J1) -- (Z1);
    \draw[bigarrow] (Z1) -- (Y);
    
    \draw[dashed, -latex] (I) -- node[pos=0.4, above, font=\footnotesize] {$\omega_1(\mathbf{I}_{\pi(1)})$} (J1);
    \draw[dashed, -latex] (X) -- node[pos=0.35, above, font=\footnotesize] {$\tau_1(\mathbf{X}_{\pi(1)})$} (Z1);
    \draw[dashed, -latex] (R) -- node[midway, above, font=\footnotesize, align=center] {$\tau_0(\mathbf{R})$ \\ $= \sum_t R_t$} (Y);

    \draw[groupbox] (\leftedge,\yPosInterventionBoxTop) rectangle (\rightedge,\yPosInterventionBoxBottom);
    \node[anchor=north west, font=\footnotesize, inner sep=0.2em] at (\leftedge,\yPosInterventionBoxTop) {Interventions};

    \draw[groupbox] (\leftedge,\yPosActionStateBoxTop) rectangle (\rightedge,\yPosActionStateBoxBottom);
    \node[anchor=south west, font=\footnotesize, inner sep=0.2em] at (\leftedge,\yPosActionStateBoxBottom) {Actions/States};

    \draw[groupbox] (\leftedge,\yPosRewardBoxTop) rectangle (\rightedge,\yPosRewardBoxBottom);
    \node[anchor=south west, font=\footnotesize, inner sep=0.2em] at (\leftedge,\yPosRewardBoxBottom) {Rewards};
    
    \node[anchor=south west, font=\footnotesize, inner sep=0.2em] at (\leftedge-0.86em,\yPosDashedBoxBottom) {Low-level};

    \node[anchor=south west, font=\footnotesize, inner sep=0.2em] at (\highLevelBoxLeft,\yPosDashedBoxBottom) {High-level};

\end{scope}
\end{tikzpicture}
        \vspace{-\baselineskip}
        \caption{\textbf{From Reinforcement Learning Policies to Causal Explanations.}
        }
        \label{fig:from_rl_to_causality}
    \end{subfigure}
    \hfill
    \begin{subfigure}[b]{0.45\textwidth}
        \begin{subfigure}[b]{\textwidth}
            \centering
            \begin{tikzpicture}[
  scale=1.00,
  node distance=\tikznodedistance,
]
  \def\tikznodedistance{2.5cm}  %
  \def\tikzverticaldistance{1.5cm}  %
  \def\rightnodesep{-0.1cm}  %
  
  \coordinate (topleft) at (0,0);
  \coordinate (topright) at (\tikznodedistance,0);
  \coordinate (bottomleft) at (0,-\tikzverticaldistance);
  \coordinate (bottomright) at (\tikznodedistance,-\tikzverticaldistance);
  
  \node (PLX) at (topleft) {$P_{\mathcal L}(X)$};
  \node (PTZ) at (topright) {$\widehat{P}^{(0)}_\tau(Z, Y)$};
  \node[right=\rightnodesep] (hlnoint) at (PTZ.east) {$\approx P_{\mathcal H}(Z, Y) \hphantom{iiii}$};

  \node (PiX) at (bottomleft) {$P^{(i)}_{\mathcal L}(X)$};
  \node (PiTZ) at (bottomright) {$\widehat{P}^{(i)}_\tau(Z, Y)$};
  \node[right=\rightnodesep] (hlint) at (PiTZ.east) {$\approx P_{\mathcal H}^{(\omega(i))}(Z, Y)$};

  \draw[->] (PLX.east) -- (PTZ.west) node[midway,above] {$\tau$};
  \draw[->] (PiX.east) -- (PiTZ.west) node[midway,above] {$\tau$};

  \draw[->] (PLX.south) -- (PiX.north) node[midway,left] {$i$};
  \draw[->] (hlnoint.south) -- (hlint.north) node[midway,right] {$\omega(i)$};
\end{tikzpicture}
            \vspace{-\baselineskip}
            \caption{
            \textbf{Approximately Commutative Diagram.}
            }
            \label{fig:approximate_commutative_diagram}
        \end{subfigure}
        \vspace{1em}
        \begin{subfigure}[b]{\textwidth}
            \centering
            \input{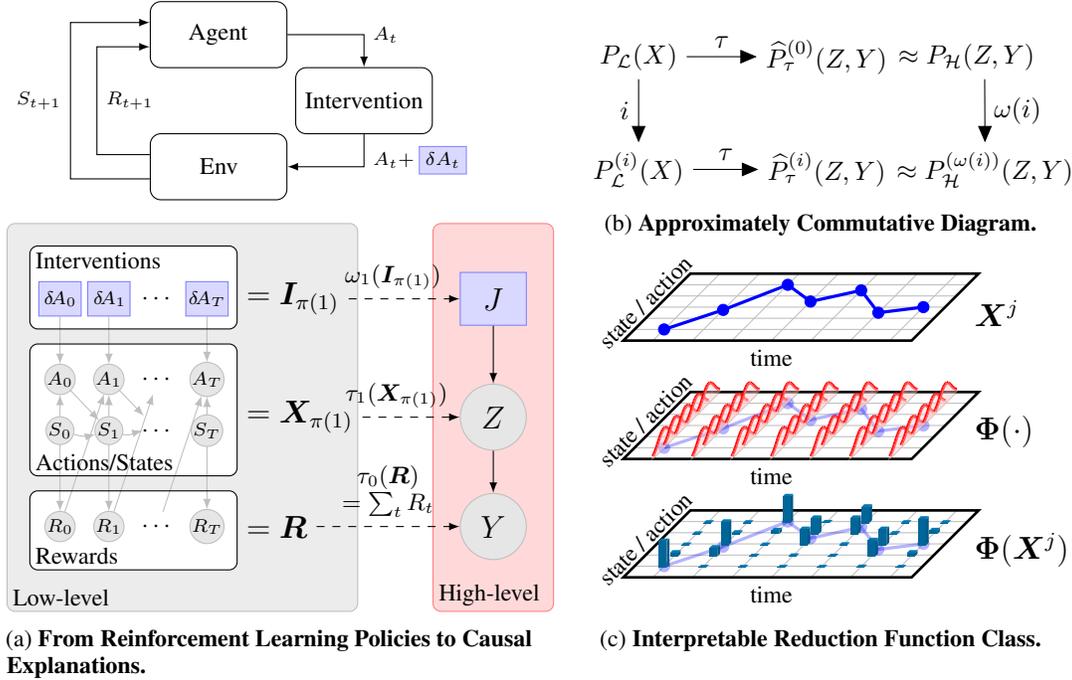}
            \vspace{-\baselineskip}
            \caption{
            \textbf{Interpretable Reduction Function Class.}
            }
            \label{fig:interpretable_function_class}
        \end{subfigure}
    \end{subfigure}
    \caption{\looseness-1
    \textbf{Learning Causal Explanations of RL Policies.\label{fig:overview}}
    (a) shows how RL policies are translated to a Causal Model Reduction problem.
    We sample episodes from the interactions between a trained agent and its environment, where the sampled actions are augmented through shift interventions $\delta A_t$ before they are executed.
    We treat the episode variables as nodes in a low-level causal graph, with the shifts $\delta A_t$ acting as interventions on the actions $A_t$.
    Map $\tau$ condenses the low-level causal variables to a simpler high-level model with two variables: the target $Y$, summarizing the rewards, and the cause $Z$, summarizing the subset $\pi(1)$ of low-level nodes.
    Similarly, $\omega_1$ maps the vector of low-level interventions $\mathbf{I}$ to its high-level counterpart $J$.
    The main learning signal is shown in (b).
    We learn the maps $\tau$ and $\omega$ by making the diagram approximately commutative by minimizing the divergence between the distributions on each side of the $\approx$ sign.
    (c) shows a nonlinear interpretable function class that can be used to learn the reduction maps $\tau_1$ and $\omega_1$.
    A state/action variable's trajectory $\Xb^j$ is encoded through equally spaced Gaussian kernels to a feature vector $\mathbf{\Phi}(\Xb^j)$.
    \vspace{-1.4\baselineskip}
    }
\end{figure}
\par
\looseness-1
In this work, we address these challenges by taking a causal perspective on explaining the behavior of learned RL policies, summarized in Fig.~\ref{fig:overview}.
We formulate the problem as a \textit{Causal Model Reduction} (CMR), where we treat the joint system of actions, environment variables, and rewards as a complex \textit{low-level causal model}.
To probe the low-level model, we introduce random perturbations to policy actions as interventions and observe their impact on cumulative rewards.
We learn a simplified \textit{high-level causal model} that summarizes the most important factors determining differences in the expected reward.
As our main learning signal, we use \textit{interventional consistency}: the low- and high-level models should respond to interventions in a similar fashion.
The map from the original causal model to the reduced one can thus be used to explain the behaviors that are most influential on the success or failure of the RL policy.
\par 
\looseness-1
Our contributions are:
\begin{inparaenum}[(i)] 
    \item formulating the problem of explaining RL policy behavior as a Causal Model Reduction and developing a nonlinear extension of Targeted Causal Reduction (TCR)~\citep{kekic2024targeted} that 
    summarizes the main factors explaining a target phenomenon in complex systems,
    \item providing theoretical guarantees of solution uniqueness for a broad class of nonlinear models, ensuring unambiguous explanations despite the identifiability challenges of nonlinear systems,
    \item introducing a class of interpretable nonlinear reduction functions that help explain the behavior captured by the learned reductions,
    \item demonstrating experimentally that our approach uncovers behavioral patterns and biases in two trained RL tasks.
\end{inparaenum}

\section{Background}
\label{sec:background}
\paragraph{Notation.} %
We use lowercase letters for deterministic variables 
and capital letters for random variables. %
$X\sim P$ means $X$ has distribution $P$. 
We use boldface for column vectors, and $\Xb_S$ for the subvector of $\Xb$ restricted to the components in set $S$. 
The number of elements in a set $S$ is  $\#S$. %

\subsection{Structural Causal Models (SCMs)}
\looseness-1
SCMs are a mathematical framework for representing cause-effect relationships in complex systems.
\begin{definition}[Structural Causal Model~\citep{pearl2009causality,causality_book}]\label{def:SCM}
    An $n$-dimensional SCM is a triplet $\Mcal{=}(\mathcal{G},\mathbb{S},P_\Ub)$ consisting of:
    \begin{inparaenum}[(i)]
        \item a directed acyclic graph $\mathcal{G}$ with $n$ vertices,
        \item a joint distribution $P_\Ub$ over exogenous or noise variables $\{U_j\}_{j\leq n}$,
        \item a set $\mathbb{S}{=}\{X_j \coloneqq f_j(\textbf{Pa}_j,U_j), j{=}1,\dots,n\}$ of structural equations, 
        where $\textbf{Pa}_j$ are the variables indexed by the set of parents of vertex $j$ in $\mathcal{G}$.
    \end{inparaenum}
    This induces a joint distribution $P_\Mcal$ over the endogenous variables $\Xb=[X_1, \dots, X_n]^\intercal$.\footnote{
This SCM definition allows for confounding between variables through the potential lack of independence between the exogenous variables $\{U_j\}$. 
Although Def.~\ref{def:SCM} uses the common assumption of acyclic graphs for simplicity, our approach is also compatible with some families of causal graphs with cycles. See \Cref{app:cyclic_scms}.
}  
\end{definition}%
The endogenous variables $\Xb$ encode the system's observables, where each variable $X_j$ is determined through the deterministic causal mechanism $f_j$, its parent variables $\textbf{Pa}_j$, and the exogenous noise variable $U_j$.
One often encountered class of SCMs are \textit{additive noise models} where structural equations take the simplified form $X_j \coloneqq f_j(\textbf{Pa}_j)+U_j$. %
\par
\textit{Interventions} are encoded in SCMs by replacing one or several structural equations. %
An intervention transforms the original model $\Mcal {=}(\mathcal{G},\mathbb{S},P_\Ub)$ into an intervened model $\Mcal^{(\ib)} {=} (\mathcal{G}^{(\ib)},\mathbb{S}^{(\ib)},P_\Ub^{(\ib)})$, where $\ib$ is the vector parameterizing the intervention.
The base probability distribution of the unintervened model is denoted $P_{\Mcal}^{(0)}$ or simply $P_{\Mcal}$ and the interventional distribution associated with $\Mcal^{(\ib)}$ is denoted $P_{\Mcal}^{(\ib)}$.
In this work, we focus on \textit{shift interventions}, which modify the structural equation of variable $X_l$ by shifting it by a scalar $i_l$
\begin{equation}
\{X_l\coloneqq f_l(\textbf{Pa}_l,U_l)\} \mapsto \{X_l\coloneqq f_l(\textbf{Pa}_l,U_l)+i_l\}\,.
\end{equation}
These can be combined to form multi-node interventions with vector parameter $\ib$. %
\subsection{Causal Model Reductions (CMRs)}
\label{ssec:causal_model_reductions}

Causal models with a large number of variables can be difficult to interpret and work with.
Causal Model Reductions (CMRs)~\citep{kekic2024targeted} are dimensionality reduction approaches that map such detailed low-level causal models to approximate high-level descriptions with fewer variables while preserving the essential causal properties. 
They closely relate to several notions of \textit{causal abstraction} \citep{beckers2019abstracting,beckers2020approximate,massidda2023causal,rischel2021compositional,geiger2023causal,chalupka2014visual,rubenstein2017causal}.

\paragraph{Low- and High-level SCMs.} We consider two causal models: 
\begin{itemize}
\item A \textit{low-level SCM} $\Lcal$ with endogenous variables $\Xb\sim P_{\Lcal}$ (on range $\Xcal$) with exogenous variables $\Ub\sim P_\Ub$, and set of possible interventions $\Ical$, leading to interventional distributions $(P_{\Lcal}^{(\ib)})_{\ib \in \Ical}$.
\item A lower-dimensional \textit{high-level SCM} $\Hcal$ with endogenous variables $\Zb\sim P_{\Hcal}$ (on range $\Zcal$), exogenous variables $\Wb\sim P_{\Wb}$, set of interventions $\Jcal$, and distributions $(P_{\Hcal}^{(\jb)})_{\jb \in \Jcal}$.
\end{itemize}
$\Xb$ is then \textit{reduced} into a $\Zcal$-valued high-level random variable using a deterministic map $\tau:\Xcal\to\Zcal$.  %
\paragraph{Interventional Consistency.} The key criterion for a good reduction is \textit{interventional consistency}: the high-level model should respond to interventions in ways that correspond to the original model's behavior under analogous interventions (\textit{cf.}\ \citep{chalupka2014visual,rubenstein2017causal,beckers2019abstracting,Keurtietal23}).
To formalize this, we consider $\tau_\# [P_{\Lcal}]$, the so-called \textit{push-forward distribution} by $\tau$ of the low level model distribution, such that
\begin{equation}
\tau(\Xb) \sim \tau_\# [P_{\Lcal}]\,.
\end{equation}
Interventional distributions are pushed forward in the same way from the low-level to the high-level, allowing to define the notion of an \textit{exact transformation}:

\begin{definition}[Exact transformation \citep{rubenstein2017causal}]\label{def:exact_transformation}
A map $\tau:\Xcal\to \Zcal$ is an exact transformation from $\Lcal$ to $\Hcal$ if it is surjective, and there exists a surjective \textit{intervention map} $\omega:\Ical\to\Jcal$ such that for all $\ib \in\Ical$
\begin{equation}
    \tau_{\#}\left[P_{\Lcal}^{(\ib)}\right] = P_{\Hcal}^{(\omega(\ib))}\,.
    \label{eq:exact_transformation}
\end{equation}
\end{definition}
That is, regardless of whether we first intervene through $\ib$ in the low-level model $\Lcal$ and then map to the high-level $\Hcal$, or first map to $\Hcal$ and intervene with $\omega(\ib)$, we arrive at the same distribution.

\subsection{Targeted Causal Reduction (TCR)}
\label{ssec:targeted_causal_reduction}

While exact transformations are theoretically elegant, they present two practical challenges:
\begin{inparaenum}[(i)]
    \item achieving perfect equality in \Cref{eq:exact_transformation} is often infeasible, so we need a systematic method to learn approximately consistent reductions from data, and
    \item there are many possible consistent reductions 
    and finding a meaningful one needs additional constraints. 
\end{inparaenum}
\textit{Targeted Causal Reduction} (TCR) \citep{kekic2024targeted} addresses these challenges by focusing on explaining a specific target variable of interest and providing a concrete learning objective.
Below, we describe the case of a single high-level cause, the general case is described in \Cref{ssec:targeted_causal_reduction_mult}.

\begin{definition}[General TCR Framework]\label{def:TCR}
In TCR, we learn transformations $(\tau,\omega)$ between models $\Lcal$ and $\Hcal$ under the following assumptions: 
\begin{enumerate}
\looseness-1
\item \textbf{Target}: $\Hcal$ contains only two nodes: a predefined scalar target variable $Y = \tau_0(\Xb)$ that quantifies a phenomenon of interest, and a high-level cause $Z=\tau_1(\Xb)$ that needs to be learned.

\item \textbf{Parameterized High-level Models}: The high-level SCM is constrained to a class of linear additive Gaussian noise models $\{\Hcal_{\gammab}\}_{\gamma \in \Gamma}$ with parameters $\gammab$ to be learned, implying that
\begin{inparaenum}[(i)]
    \item the mechanism $Z\to Y$ is linear, with (linear) causal coefficient $\alpha$ and additive noise $W_0$ and
    \item the exogenous variables $\{W_0, W_1=Z\}$ have a factorized Gaussian distribution $P_\Wb= P_{W_0}P_{W_1}$. 
\end{inparaenum}
\item \textbf{Constructive Transformations}: The respective components of the maps $\tau$ and $\omega$ associated to the high-level cause and effect depend on disjoint subsets $\pi(1)$ and $\pi(0)$ of low-level variables:

\begin{align}
    \tau(\xb) = (\bar{\tau}_0(\mathbf{x}_{\pi(0)}),\bar{\tau}_1(\mathbf{x}_{\pi(1)})\, , \qquad
    \omega(\ib) = (0, \bar{\omega}_1(\ib_{\pi(1)})) \,.
\end{align}
the first component of $\omega$ is set to zero to prevent direct high-level interventions on $Y$, forcing changes in $Y$ to be explained through high-level interventions on $Z$.

\item \textbf{Approximate Consistency Objective:} Rather than requiring exactness of \Cref{eq:exact_transformation}, TCR minimizes the consistency loss, quantifying an average discrepancy $\mathrm{D}$ between the push-forward interventional distributions and their corresponding high-level model distributions:
\begin{equation}\label{eq:lcons}
    \Lcal_\mathrm{cons}(\tau,\omega,\gamma) = \EE_{\ib\sim P_\mathbf{I}} \left[ \mathrm{D}\left( \widehat{P}_{\tau}^{(\ib)}(Y,Z)\|P_{\Hcal}^{(\omega(\ib))}(Y,{Z})\right)\right]\,, \, \mbox{ with } \widehat{P}_{\tau}^{(\ib)} = \tau_{\#}\left[P_{\Lcal}^{(\ib)}\right]\,.
\end{equation}
The relationship between those distributions and the learned reduction maps is shown in \Cref{fig:approximate_commutative_diagram}.
The expectation is taken over a prior distribution of interventions $P_\mathbf{I}$.
\footnote{
Remarks: %
In practice, $\pi(0)$ and $\pi(1)$ can be fixed based on domain knowledge, with $\pi(1)$ typically encompassing all variables potentially influencing the target that are not in $\pi(0)$. 
The intervention prior $P_\mathbf{I}$ %
can be uninformative (\textit{e.g.}\, i.i.d. Gaussian) or informed by domain expertise. 
}
\end{enumerate}
\end{definition}
\looseness-1
\paragraph{Intuition behind TCR.} TCR provides a form of \textit{etiological explanation} for the target phenomenon $Y$, identifying the origins of its variation.
It accomplishes this by creating an interpretable formulation where changes in $Y$ due to low-level interventions are explained through a high-level causal mechanism.
TCR forces explanations to flow through the learned high-level cause $Z$.
When asked \textit{``What causes changes in $Y$?''}, TCR answers with a specific causal statement: \textit{``$Y$ is changed by acting on the high-level cause $Z = \tau_1(X_{\pi(1)})$ through intervention $\omega(i_{\pi(1)})$''}.
The interpretability of the maps $\tau$ and $\omega$ provides valuable insights: $\tau$ reveals which properties of the system influence $Y$, while $\omega$ shows which types of interventions most effectively change these properties.

\paragraph{Linear TCR.}%
The TCR formulation by \citet{kekic2024targeted} imposes some significant constraints:
\begin{inparaenum}[(i)]
    \item \textbf{Linear transformations} for $\tau$ and $\omega$, simplifying analysis and facilitating interpretability but limiting expressivity; and 
    \item \textbf{Gaussian KL-divergence approximation} for the discrepancy $\mathrm{D}$ in Eq.~\eqref{eq:lcons}, where distributions are approximated as Gaussians with matched means and variances (see App.~\ref{app:KL}). This has the benefits of taking a differentiable closed-form and allowing easy optimization.
\end{inparaenum}

\section{From RL Policies to Causal Model Reduction}
\label{sec:from_rl_to_cmr}
\looseness-1
RL episodes inherently form causal chains where the agent's observations lead to actions, which cause environmental changes resulting in new states and rewards.
These interactions naturally map to structural causal models.
To identify the causal impact of specific actions on overall performance, we need to observe counterfactual outcomes---what would have happened if the agent had acted differently.
We achieve this by introducing controlled perturbations to the policy's chosen actions and observing their effects.
Specifically, we construct episodes where each action $A_t$ selected by the policy is perturbed by a small random shift $\delta A_t \sim \mathcal{N}(0, \sigma_t)$ before being executed in the environment:
\begin{equation}
    \left( S_0,\, A_0{+}\delta A_0,\, R_1,\, S_1,\, A_1{+}\delta A_1,\, R_2,\, \dots,\, S_{T-1},\, A_{T-1}{+}\delta A_{T-1},\, R_T,\, S_T \right)\,.
\end{equation}
These perturbations serve as shift interventions in our causal framework and allow us to observe how deviations from the policy's intended actions propagate %
to the cumulative reward.
\par
\looseness-1
To formalize this as a TCR problem, we define the low-level endogenous variables $\mathbf{X}$ consisting of states and actions $\mathbf{X}_{\pi(1)}{=}(A_0, \dots, A_{T-1}, S_0, \dots, S_T)^\intercal$ and rewards $\mathbf{X}_{\pi(0)}{=}\mathbf{R}{=}(R_1, \dots, R_T)^\intercal$, as shown in \Cref{fig:from_rl_to_causality}.
The interventions are $\mathbf{I}_{\pi(1)}{=}(\delta A_0, \dots, \delta A_{T-1}, 0, \dots, 0)^\intercal$, where we intervene only on actions and not on states, and the target variable is the cumulative reward $Y{=}\sum_{t=1}^{T} R_t$.

With this formulation, we can now apply the TCR framework to learn a simplified high-level model with one high-level cause $Z$ that best explains variations in the target variable $Y$.
The reduction maps $\tau_1$ and $\omega_1$ identify which aspects of the states and actions most significantly influence the overall performance, thereby providing an interpretable causal explanation of policy behavior.

\section{Nonlinear Targeted Causal Reduction (nTCR)}
\label{sec:nonlinear_targeted_causal_reduction}

While the linear TCR framework introduced in \Cref{ssec:targeted_causal_reduction} provides a principled approach to learning simplified causal models, many real-world phenomena~-~particularly in reinforcement learning~-~involve inherently nonlinear relationships that cannot be adequately captured by linear maps. 
In this section, we extend TCR to nonlinear settings while preserving its key benefits of interpretability and causal consistency. 
This extension, which we call nonlinear TCR (nTCR), allows us to discover and represent causal patterns in complex interactive systems such as those controlled by RL policies. 
Despite the added flexibility of nonlinear functions, we show that our approach maintains strong theoretical guarantees. %
Specifically, we demonstrate both the existence and uniqueness of reductions with exact interventional consistency for a broad class of nonlinear causal models. 
\par

\subsection{Normality Regularization}
\label{ssec:normality_regularization}
The original discrepancy of linear TCR in \Cref{eq:lcons} does not enforce distributions to fully match (see App.~\ref{app:KL}), and the nonlinear reduction can therefore lead to a highly non-Gaussian high-level cause.
We therefore introduce a normality regularization encouraging $\tau_1(\Xb)$ to be Gaussian. 
In addition to being more faithful to the initial idea of Def.~\ref{def:exact_transformation}, this allows for easier interpretation of the learned causes (as it enforces a simple unimodal cause distribution).
The theory behind the choice of consistency objective is further discussed in App.~\ref{app:suppl_theory}. 

Given the push-forward distribution $\widehat{P}_{\tau}^{(\mathbf{i})}(Z)$ (with CDF $F_{\widehat{P}_{\tau}^{(\mathbf{i})}(Z)}(x)$) of the high-level cause under intervention $\mathbf{i}$, we first standardize this distribution to zero mean and unit variance to obtain $\widehat{P}_{\tau,\text{std}}^{(\mathbf{i})}(Z)$.
We then measure the deviation from normality as the 1-Wasserstein distance $W_1$ between this standardized distribution and the standard normal distribution $\mathcal{N}(0,1)$ (with CDF $\Phi(x)$):
\begin{equation}
\mathcal{L}_{\text{norm}}
= \mathbb{E}_{\mathbf{i} \sim P(\mathbf{i})} \left[ W_1(\widehat{P}_{\tau,\text{std}}^{(\mathbf{i})}(Z), \mathcal{N}(0,1)) \right]
= \mathbb{E}_{\mathbf{i} \sim P(\mathbf{i})} \left[ \int_{\mathbb{R}} |F_{\widehat{P}_{\tau,\text{std}}^{(\mathbf{i})}(Z)}(x) - \Phi(x)| \, dx\right]\,.
\end{equation}

The total optimization objective for nTCR then becomes $\mathcal{L}_{\text{total}} = \mathcal{L}_{\text{cons}} + \eta_{\text{norm}} \mathcal{L}_{\text{norm}}$.
In practice, we compute $\mathcal{L}_{\text{norm}}$ using a differentiable approximation of the CDF and by evaluating the $L_1$ distance at a finite set of points, as presented in \Cref{app:differentiable_normality_regularization}.
This approach allows for efficient optimization while maintaining the desired distributional properties of the high-level causes.

\subsection{Exact solutions}
\label{sec:ident}
Nonlinear TCR allows a much larger space of functions to fit the learning objective. This can lead to overfitting and even potential non-identifiability issues where multiple reductions could exist, which would strongly limit the interpretability of the approach. 
The following statement provides identifiability guarantees even in the nonlinear case, as long as we obtain exact solutions. 
\begin{restatable}[Uniqueness]{proposition}{unique}\label{prop:unique}
    Assume (i) the low-level model is an additive noise SCM of the form
    \begin{equation}
\begin{bmatrix}
    \Xb_{\pi(1)},\\
    \Xb_{\pi(0)}
\end{bmatrix}\coloneqq
\begin{bmatrix}
    \fb_1(\Xb_{\pi(1)})+\Ub_{\pi(1)} +\ib_{\pi(1)} \,,\\
    \fb_0(\Xb_{\pi(0)},\Xb_{\pi(1)})+\Ub_{\pi(0)}\,\,
\end{bmatrix}\,,\quad \Ub_{\pi(1)} \sim P_1 \mbox{ indep.\ of 
  }\ \  \Ub_{\pi(0)} \sim P_0\,,
\label{eq:analytic_solution_scm}
\end{equation}
(ii) $P_1$ admits a probability density with non-vanishing Fourier transform, (iii) The high-level model has a non-zero causal effect ($\alpha\neq 0$). Then, if there exists a constructive transformation following Def.~\ref{def:TCR} that is exact for all $\ib \in \RR^{\#\pi(1)}$, it is also unique up to multiplicative and additive constants. 
\end{restatable}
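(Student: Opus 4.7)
The plan is to reduce exact consistency to a functional equation on $\Phi := \tau_1 \circ g$ (where $g$ is the invertible solution map of the $\pi(1)$-block of Eq.~\eqref{eq:analytic_solution_scm}) that forces $\Phi$ to be affine, then use the high-level structure to pin down its coefficients up to scaling and shift. As a first step, observe that shift interventions compose additively at both levels: in the low-level model, applying $\ib_1$ then $\ib_2$ coincides with applying $\ib_1 + \ib_2$, and likewise in the high-level model where a shift intervention translates $Z$ by $\bar{\omega}_1(\ib)$. Matching these composition laws under exact consistency yields $\bar{\omega}_1(\ib_1 + \ib_2) = \bar{\omega}_1(\ib_1) + \bar{\omega}_1(\ib_2)$, and the measurability inherent to the SCM framework promotes this Cauchy equation to linearity: $\bar{\omega}_1(\ib) = \bb^\intercal \ib$ for some $\bb \in \RR^{\#\pi(1)}$.

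Next, the additive-noise, acyclic structure of the $\pi(1)$-block and its decoupling from $\pi(0)$ allow the substitution $\tilde{\Ub} = \Ub_{\pi(1)} + \ib$ to convert the intervened SCM into the unintervened one with shifted noise, giving $\Xb_{\pi(1)}^{(\ib)} = g(\Ub_{\pi(1)} + \ib)$. Setting $\Phi := \tau_1 \circ g$, marginal $Z$-consistency becomes
\begin{equation*}
\Phi(\Ub_{\pi(1)} + \ib) \overset{d}{=} \Phi(\Ub_{\pi(1)}) + \bb^\intercal \ib \quad \text{for all } \ib\,.
\end{equation*}
Taking characteristic functions in a dummy parameter $t$ recasts this as a convolution identity $(\phi_t * \tilde{p}_1)(\ib) = \kappa_t \, e^{it \bb^\intercal \ib}$ with $\phi_t(v) = e^{it\Phi(v)}$, $\tilde{p}_1(x) = p_1(-x)$, and $\kappa_t$ independent of $\ib$. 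Fourier-transforming in $\ib$ and dividing by $\widehat{\tilde{p}_1}$, nonvanishing by assumption (ii), yields $\widehat{\phi_t}(\xi) \propto \delta(\xi - t\bb)$. Inverting and using $|\phi_t| \equiv 1$ gives $\phi_t(u) = e^{i\theta_t}\, e^{it\bb^\intercal u}$, and comparing phases (with continuity in $t$) yields $\Phi(u) = \bb^\intercal u + c$; equivalently, $\tau_1(\Xb_{\pi(1)}) = \bb^\intercal g^{-1}(\Xb_{\pi(1)}) + c$.

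Finally, the direction of $\bb$ is pinned down by the high-level equation $Y = \alpha Z + W_0$: marginal consistency of $Y$ gives $\EE[\tau_0(\Xb_{\pi(0)}^{(\ib)})] = \alpha\, \bb^\intercal \ib + \text{const}$, and differentiating at $\ib = 0$ yields $\alpha \bb = \nabla_\ib \EE[\tau_0(\Xb_{\pi(0)}^{(\ib)})]\big|_{\ib=0}$, a vector fixed by the low-level SCM and the predefined $\tau_0$. Since $\alpha \neq 0$ by assumption (iii), $\bb$ is determined up to the scalar $1/\alpha$, so any two exact solutions satisfy $\tau_1' = c_1 \tau_1 + c_2$ and $\bar{\omega}_1' = c_1 \bar{\omega}_1$ with $c_1 = \alpha/\alpha'$, which is the claimed uniqueness. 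The main technical obstacle is the Fourier step: $\phi_t$ is bounded but not integrable, so manipulations must be understood as tempered distributions, and legitimizing division by $\widehat{\tilde{p}_1}$ requires careful use of the nonvanishing assumption. A cleaner alternative I would consider is to differentiate the convolution identity in $\ib$ and reduce to $(\nabla \psi_t) * \tilde{\rho}_t = 0$ for $\psi_t(v) = \phi_t(v) e^{-it\bb^\intercal v}$ and $\rho_t(v) = e^{it\bb^\intercal v} p_1(v)$; nonvanishing of the Fourier transform of $\rho_t$ then directly forces $\psi_t$ to be constant.
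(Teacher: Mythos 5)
Your route has a genuine gap at its first step, and the rest of the argument leans on it. The definition of an exact transformation constrains $\omega$ only pointwise: for each $\ib$, the push-forward $\tau_{\#}[P_{\Lcal}^{(\ib)}]$ must equal $P_{\Hcal}^{(\omega(\ib))}$. Nothing requires $\omega$ to respect composition of interventions, so the identity $\bar{\omega}_1(\ib_1+\ib_2)=\bar{\omega}_1(\ib_1)+\bar{\omega}_1(\ib_2)$ does not follow from ``matching composition laws.'' In fact, the only constraint exactness places on $\bar{\omega}_1$ comes from matching the mean of the $Z$-marginal: with $g=(\textbf{id}-\fb_1)^{-1}$ and $\Phi=\tau_1\circ g$, it reads $\bar{\omega}_1(\ib)=\EE[\Phi(\Ub_{\pi(1)}+\ib)]-\EE[\Phi(\Ub_{\pi(1)})]$, which is additive in $\ib$ exactly when $\Phi$ is affine---the very conclusion you are trying to establish, so the Cauchy-equation step is circular. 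This matters downstream: your characteristic-function identity $(\phi_t * \tilde{p}_1)(\ib)=\kappa_t e^{it\bar{\omega}_1(\ib)}$ only yields a Dirac mass after Fourier transforming in $\ib$ because the right-hand side is assumed to be a pure complex exponential; with an unknown $\bar{\omega}_1$ in the exponent the deconvolution step stalls, and with it the claim that $\Phi$ is affine. (The further distributional technicalities you flag are secondary; this is the structural problem.)

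Note also that you are attempting something strictly stronger than the proposition requires: you try to characterize the form of any exact solution, whereas the statement only asks that two exact solutions agree up to affine constants. The paper's proof avoids $\omega$ and the full distributional matching entirely: for any exact solution, matching the mean of $Y$ under every intervention and using linearity of the high-level mechanism $Y=\alpha Z+W_0$ gives $\alpha\,\EE[\tau_1(g(\Ub_{\pi(1)}+\ib))]=\EE[\tau_0(\Xb_{\pi(0)}^{(\ib)})]-\EE[W_0]$ for all $\ib$. Comparing two candidate solutions (with possibly different $\alpha$, $\tau_1$, $W_0$) and writing the difference as a convolution, $p_1 * (\delta\tau_1\circ g)(\ib)=\mathrm{const}$, the non-vanishing Fourier transform of $p_1$ forces $\delta\tau_1\circ g$ to be constant, hence $\tau_1$ is unique up to multiplicative and additive constants. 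Your step 3 is essentially this moment-matching argument; if you run it on the difference of two candidate solutions instead of first trying to prove linearity of $\bar{\omega}_1$ and affinity of $\Phi$, the gap disappears.
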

See App.~\ref{app:unique} for the proof.
The above result does not guarantee the existence of such exact transformations.  
The following proposition shows we can construct families of low-level additive noise models that admit a nonlinear exact transformation $ \tau_1(\Xb_{\pi(1)})\to \tau_0(\Xb_{\pi(0)})$.

\begin{restatable}[Existence]{proposition}{anasol} \label{prop:analytic_solution}
Assume a low-level additive SCM following \Cref{eq:analytic_solution_scm}. Assume additionally that (i) $\Ub$ is jointly Gaussian, (ii) $\fb_0(\Xb_{\pi(0)},\Xb_{\pi(1)}) = \hb_0(\Xb_{\pi(0)}) + B(\Xb_{\pi(1)}-\fb_1(\Xb_{\pi(1)}))$, for some functions $\hb_0,\fb_1$, some matrix $B\in\RR^{(\#\pi(0)\times \#\pi(1))}$, and (iii) $Y=\tau_0(\Xb_{\pi(0)})=\ab^\top (\Xb_{\pi(0)} - \hb_0(\Xb_{\pi(0)}))$ for arbitrary non-vanishing $\ab\in \RR^{\#\pi(0)}$, then %
the transformation defined by the following maps is exact 
\begin{equation}
\bar{\tau}_1(\Xb_{\pi(1)})=\ab^\top B(\Xb_{\pi(1)}-\fb_1(\Xb_{\pi(1)}))\,,\quad \bar{\omega}_1(\ib_{\pi(1)})=\ab^\top B \ib_{\pi(1)} \,.
\end{equation}
\end{restatable}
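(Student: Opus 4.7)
The plan is to verify Def.~\ref{def:exact_transformation} directly: compute the low-level push-forward $\tau_{\#}[P_{\Lcal}^{(\ib)}]$ in closed form, construct a candidate high-level SCM $\Hcal$ in the linear additive Gaussian class of Def.~\ref{def:TCR}, and check that $P_{\Hcal}^{(\omega(\ib))}$ coincides with this push-forward for every $\ib \in \RR^{\#\pi(1)}$.

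The first step is to rewrite the quantities that enter the reduction maps purely in terms of the exogenous noises and the intervention. The structural equation for $\Xb_{\pi(1)}$ under the shift intervention rearranges immediately to $\Xb_{\pi(1)} - \fb_1(\Xb_{\pi(1)}) = \Ub_{\pi(1)} + \ib_{\pi(1)}$. Substituting assumption (ii) into the structural equation for $\Xb_{\pi(0)}$ and subtracting $\hb_0(\Xb_{\pi(0)})$ from both sides gives
\begin{equation*}
\Xb_{\pi(0)} - \hb_0(\Xb_{\pi(0)}) \;=\; B\bigl(\Xb_{\pi(1)} - \fb_1(\Xb_{\pi(1)})\bigr) + \Ub_{\pi(0)} \;=\; B\bigl(\Ub_{\pi(1)} + \ib_{\pi(1)}\bigr) + \Ub_{\pi(0)}\,.
\end{equation*}
This side-steps the implicit (and possibly cyclic) nature of the equation for $\Xb_{\pi(0)}$: only the residual $\Xb_{\pi(0)} - \hb_0(\Xb_{\pi(0)})$ feeds into $\tau_0$, and it has a clean expression regardless of how the implicit equation is resolved.

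Applying $\bar{\tau}_1$ and $\tau_0$ to these residuals yields $Z = \ab^\top B(\Ub_{\pi(1)} + \ib_{\pi(1)})$ and $Y = \ab^\top B(\Ub_{\pi(1)} + \ib_{\pi(1)}) + \ab^\top \Ub_{\pi(0)}$, so $Y = Z + \ab^\top \Ub_{\pi(0)}$. This decomposition suggests the candidate $\Hcal$: take the causal coefficient $\alpha = 1$, let $W_1$ be Gaussian with the same law as $\ab^\top B \Ub_{\pi(1)}$ and $W_0$ Gaussian with the same law as $\ab^\top \Ub_{\pi(0)}$, and set $Z = W_1$, $Y = \alpha Z + W_0$, together with the shift intervention $\bar{\omega}_1(\ib_{\pi(1)}) = \ab^\top B \ib_{\pi(1)}$ on $Z$. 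By assumption (i), jointly Gaussian $\Ub$ combined with the stipulated independence $\Ub_{\pi(1)} \perp \Ub_{\pi(0)}$ from \eqref{eq:analytic_solution_scm} makes the linear images $W_0$ and $W_1$ independent Gaussians, so $\Hcal$ lies in the class permitted by Def.~\ref{def:TCR}.

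Finally, I would check exactness: $\tau_{\#}[P_{\Lcal}^{(\ib)}]$ is jointly Gaussian in $(Z,Y)$ with means $\ab^\top B \ib_{\pi(1)}$ and $\ab^\top B \ib_{\pi(1)}$, and with covariance determined by the Gaussian $\ab^\top B \Ub_{\pi(1)}$ and $\ab^\top \Ub_{\pi(0)}$; the intervened high-level distribution $P_{\Hcal}^{(\bar{\omega}_1(\ib_{\pi(1)}))}$ is Gaussian with identical means and covariance by construction, so the two coincide. Surjectivity of $\tau$ and $\omega$ follows because the linear functionals $\ab^\top B\,\cdot$ and $\ab^\top\,\cdot$ cover the reals (taking $\Zcal$ and $\Jcal$ as their images). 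The argument is largely a calculation; the only mild subtlety is the implicit/cyclic structural equation for $\Xb_{\pi(0)}$, which is handled uniformly by working only with the residual $\Xb_{\pi(0)} - \hb_0(\Xb_{\pi(0)})$.
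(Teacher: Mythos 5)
Your proof is correct and takes essentially the same route as the paper's: both rest on the identity $\Xb_{\pi(0)}-\hb_0(\Xb_{\pi(0)}) = B(\Ub_{\pi(1)}+\ib_{\pi(1)})+\Ub_{\pi(0)}$, so that $Z=\ab^\top B(\Ub_{\pi(1)}+\ib_{\pi(1)})$ and $Y=Z+\ab^\top\Ub_{\pi(0)}$ are linear images of the jointly Gaussian noise shifted by $\ab^\top B\ib_{\pi(1)}$, matched exactly by the high-level linear Gaussian model with $\alpha=1$ and independent $W_1$, $W_0$ distributed as $\ab^\top B\Ub_{\pi(1)}$ and $\ab^\top\Ub_{\pi(0)}$. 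The only stylistic difference is that the paper first derives the form of $\bar{\omega}_1$ and $\bar{\tau}_1$ by matching interventional expectations and then asserts sufficiency, whereas you verify exactness directly via an explicit construction of $\Hcal$; the underlying computation is the same.
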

\looseness-1 See App.~\ref{app:anasol} for the proof. 

Although we do not expect to cover all possible nonlinear exact transformations with this class of models, this can be used to validate our algorithms, as discussed in \Cref{ssec:synthetic_experiments}.

\subsection{Interpretable nonlinear function class}
\label{ssec:interpretable_function_class}

\looseness-1
When the reduction is allowed to be nonlinear, it can be challenging to interpret what the high-level cause and intervention represent in the system without constraints on the mappings $\tau_1$ and $\omega_1$. %
For example, RL episodes possess inherent temporal structure that we can exploit to build more interpretable function classes for these mappings.
We can express $\Xcal$ as a Cartesian product across $d$ features, with each feature represented as a time series $\Xcal{=}\Xcal^1{\times}\cdots{\times}\Xcal^d$, where $\Xcal^j$ represents the space of possible trajectories for the $j$-th feature over time. %
Each feature's trajectory space can be further decomposed as $\Xcal^j{=}\Xcal^j_1{\times}\cdots{\times}\Xcal^j_T$.
The intervention space $\mathcal{I}$ can be decomposed similarly.
Leveraging this structure, we define our interpretable nonlinear function class for $\tau_1$ as:
\begin{equation}
\tau_1(\mathbf{X})
= \sum_{j=1}^{d} \tau_1^j(\Xb^j)
= \sum_{j=1}^{d} \sum_{t=1}^{T} w_{j,t} \cdot \phi_{j,t}(X^j_t)
= \sum_{j=1}^{d} \sum_{t=1}^{T} w_{j,t} \cdot \exp\left(-(x-\mu_{j,t})^2 / 2\sigma_{j,t}^2 \right)
\end{equation}
\looseness-1 
Where $w_{j,t}$ are learned weights, and $X^j_t$ is the value of feature $j$ at time $t$.
For our implementation, we use Gaussian kernels $\phi_{j,t}$ as the basis functions, where $\{\mu_{j,t}, \sigma_{j,t}\}$ are fixed parameters, set to span the range of values typically encountered, see \Cref{fig:interpretable_function_class}.\footnote{The function $\omega_1$ is defined analogously, operating on the intervention space $\mathcal{I}_{\pi(1)}$.}
This can be viewed as a continuous one-hot encoding of variable-value pairs across the episode, allowing for smooth interpolation between similar states.
This approach ensures that our nonlinear reduction remains interpretable, as we can identify which features, at which time steps, contribute most significantly to the high-level causal explanation by examining the learned weights $w_{j,t}$.

\section{Case Studies}
\label{sec:experiments}

\subsection{Experiments on Synthetic Data}
\label{ssec:synthetic_experiments}
\begin{figure}[htb]
    \centering
    \includegraphics[width=\textwidth]{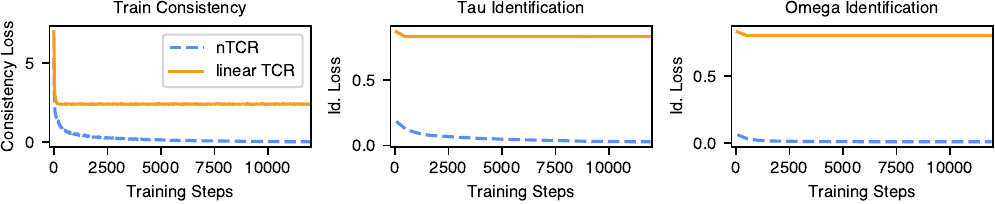}
    \caption{
    \textbf{Identification of Ground-Truth Solutions for Synthetic Low-Level Models.}
    Consistency loss (left) and the identification losses measuring agreement with the ground-truth solutions (definition in \Cref{app:synthetic_experiments}) for the $\tau$- and $\omega$-functions (middle and right) over the reduction training run.
    }
    \vspace{-1.5\baselineskip}
    \label{fig:synthetic_experiments}
\end{figure}
Before applying our approach to RL problems, we first validate its theoretical properties on synthetic data generated from known low-level causal models.
This controlled setting serves two important purposes:
\begin{inparaenum}[(i)]
    \item it demonstrates that for this model class, nTCR can indeed find the unique solution that perfectly minimizes the causal consistency loss, confirming our theoretical guarantees; and 
    \item it validates that our training methodology successfully converges to the correct solution.
\end{inparaenum}
\par
We generate synthetic data from 10 randomly sampled low-level causal models following the structure described in Prop.~\ref{prop:analytic_solution} with $\dim(\mathbf{X}_0)=1$, $\dim(\mathbf{X}_1)=9$, and $\mathbf{h}_0(\mathbf{X}_0) \equiv 0$.
Each entry in the matrix $B$ is sampled from a standard normal distribution, and we set $\mathbf{a}=1$.
For these experiments, we use neural networks as generic function approximators for both $\tau_1$ and $\omega_1$.
For more details on the experimental setting, please refer to \Cref{app:synthetic_experiments}.
\par
\Cref{fig:synthetic_experiments} shows the consistency loss and an identification measure that quantifies how closely the learned $\tau$- and $\omega$-maps match the ground truth solutions.\footnote{A  normalized L2 loss between the learned and ground-truth high-level cause; see \Cref{app:synthetic_experiments} for more details.}
We observe that nTCR successfully finds reductions with near-zero consistency loss and converges to the theoretical solution outlined in \Cref{prop:analytic_solution}, validating both our theory and implementation.

\subsection{Pendulum}
\label{ssec:pendulum_experiments}
\begin{figure}[htb]
    \centering
    \vspace{-\baselineskip}
    \includegraphics[width=\textwidth]{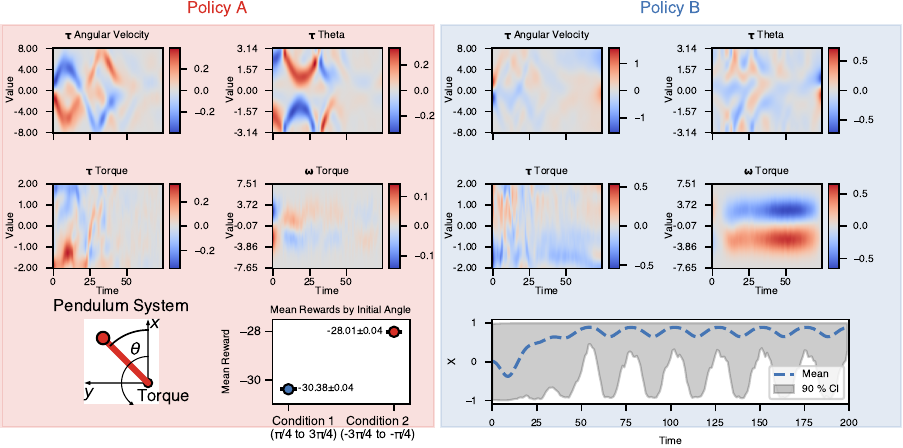}
    \caption{ %
    \textbf{Pendulum task.}
    The top two rows show the learned nTCR $\tau$- and $\omega$-maps for two policies A and B.
    The heatmaps show the learned reductions $\tau_1^j(\xb^j)$ and $\omega_1^j(\ib^j)$, where $j$ indexes the state/action variables (angular velocity, theta, and torque). 
    Note that since we only intervene on the torque, this is the only variable for which there is a nonzero $\omega$-map.
    The bottom left plot shows the pendulum system setting. 
    The middle plot on the bottom row shows the mean reward for Policy A for pendulums starting in the left quadrant (Condition 1) and those starting in the right quadrant (Condition 2). 
    The standard error of the mean is shown (the error bars are smaller than the data point in the plot).
    The bottom right plot shows the mean $x$-position of the pendulum for episodes under Policy B and the $90\%$ confidence interval.
    \vspace{-0.75\baselineskip}
    }
    \label{fig:pendulum_experiment}
\end{figure}
\looseness-1
Pendulum~\citep{towers2024gymnasium} is a classic swing-up control task where a pendulum must be brought to the upright position.
With limited torque, 
the agent must develop momentum through strategic oscillations rather than directly moving to the goal position. 
The state space comprises three variables: x-y coordinates ($\cos(\theta)$, $\sin(\theta)$) and angular velocity, while the action is the applied torque.
The reward function penalizes deviation from the upright position, angular velocity, and action magnitude. %
For the reductions, we choose the more natural feature $\theta$ instead of the $x$- and $y$-positions.
More details on the experiment setting, and the agent and nTCR training are given in \Cref{app:pendulum}.\footnote{Example episodes for the two policies are shown in the supplementary video files.}
\par 
\looseness-1
\Cref{fig:pendulum_experiment} (left, Policy A) shows that nTCR identifies two trajectory groups with significant reward variations: pendulums starting in the bottom-right corner swinging clockwise (red, higher rewards) versus those starting bottom-left swinging counterclockwise (blue, lower rewards).
The reduction shows that the agent performs better clockwise than counterclockwise~~-~~a surprising bias given that initial conditions are uniformly sampled and the environment has mirror symmetry along the axis of gravitation.
\par 
\looseness-1
For Policy B (\Cref{fig:pendulum_experiment}, right), the learned omega map reveals that shifting torque to more negative values would improve performance, particularly towards the end of the episode.
Analysis confirms this (\Cref{fig:pendulum_experiment}, bottom-right): the policy fails to consistently stabilize the pendulum upright, allowing it to tip over in the positive $\theta$ direction before applying positive torque for a full rotation recovery.
nTCR correctly identifies that applying more negative torque at the top position would prevent this instability.

\subsection{Table Tennis}
\label{ssec:table_tennis}

\begin{figure}[htb]
    \centering
    \vspace{-\baselineskip}
    \includegraphics[width=\textwidth]{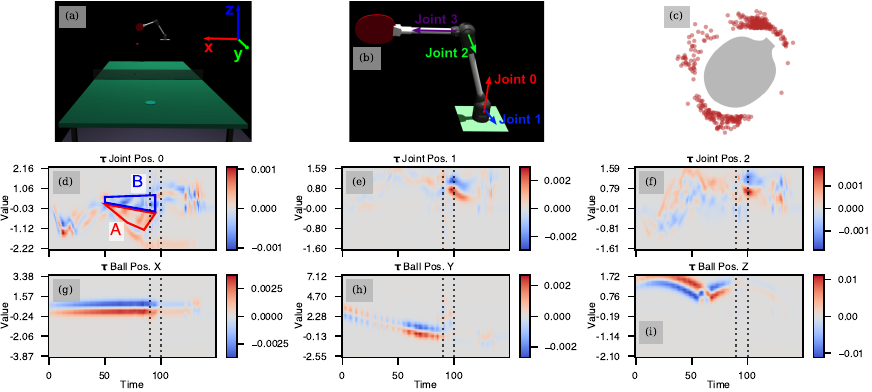}
    \caption{\looseness-1
    \textbf{Table tennis task.}
    The task involves training a robot arm to return incoming balls to a target location on the opponent's side of the table (a).
    The robot arm and its rotational axes are shown in (b).
    (c) shows the positions of 400 balls that the robot missed relative to the racket. The racket is shown in the typical angle it is in when it hits the ball.
    (d-i) show the learned $\tau$ reduction maps for the joint positions 0-2 and the ball position.
    The reduction maps for the other variables are shown in \Cref{app:table_tennis}.
    The dotted vertical lines show the time range where the robot typically hits the ball.
    \vspace{-0.5\baselineskip}
    }
    \label{fig:tennis_experiment}
\end{figure}
\looseness-1
We evaluate our approach on a robot table tennis simulation based on the real-world setup developed in~\cite{buchler2022learning,guist2024safe}.
The environment features a four degrees-of-freedom robot arm actuated by pneumatic artificial muscles, positioned on one side of a table tennis table, shown in \Cref{fig:tennis_experiment}~(a,b).
The robot's task is to return incoming balls launched by a ball gun to a desired landing point on the opponent's side of the table.
The state space includes the robot state (joint angles, joint velocities, and air pressures in each muscle) and the ball state (position and velocity). The action space consists of the changes in desired pressures for each of the eight pneumatic muscles. 
The task requires precise timing and positioning to successfully intercept the ball and direct it toward the target.
More details on the environment, the RL- and nTCR-training are given in \Cref{app:table_tennis}.\footnote{The behavior of the trained policy can be seen in the supplementary video files.}
\par
The learned $\tau$ maps for the joint positions 0-2 and the ball positions are shown in Fig.~\ref{fig:tennis_experiment}.
We make two key observations: one regarding the movements of the robot arm and another with respect to performance differences across ball trajectories.
\looseness-1
Joint 0 is the main axis used to swing the robot arm to accelerate the ball (see \Cref{fig:tennis_experiment}~(b)). 
The $\tau$-map for this variable is shown in \Cref{fig:tennis_experiment}~(d).
The map highlights two critical state regions for joint position~0 during the period when the arm is swinging towards the ball (regions A and B in \Cref{fig:tennis_experiment}~(d)):
\begin{inparaenum}[(A)]
    \item the arm swings back (negative angle values) before the ball arrives and forward when the ball arrives, leading to high rewards, and
    \item the arm swings forward too early, leading to low rewards.
\end{inparaenum}
\par
The TCR reductions for the ball position reveal two clear trends:
\begin{inparaenum}[(i)]
    \item balls that travel further toward the outside edge of the table are more difficult to hit (\Cref{fig:tennis_experiment}~(g)), and
    \item balls that bounce further from the robot, or closer to the net, present greater challenges (\Cref{fig:tennis_experiment}~(h, i)).
\end{inparaenum}
We validate these findings with an analysis of the missed balls, shown in \Cref{fig:tennis_experiment}(c).
We observe that more balls miss the racket toward the top compared to the bottom, and more to the left (corresponding to the direction toward the outside of the table) than to the right.
These observations are consistent with the $\tau$ reduction values for joint positions 1 and 2, shown in \Cref{fig:tennis_experiment}~(h, i).
Around the time of ball contact, they have their largest positive contributions near $\pi/4$---corresponding to a relatively high and further out racket---and their largest negative contributions for larger angles---corresponding to a lower racket closer to its base.

\section{Related Work}
\label{sec:related_work}
\looseness-1
Explainable reinforcement learning (XRL) focuses on understanding the decision-making processes of learning RL agents.
\citet{milani2024explainable} propose a taxonomy for XRL approaches into three categories: feature importance (FI) methods explaining the immediate context for single actions (\textit{e.g.}\ \citep{greydanus2018visualizing,iyer2018transparency,mott2019towards}), learning process methods (LPM) that reveal influential experiences from training (\textit{e.g.}\ \citep{cruz2023explainable,anderson2019explaining,bica2020learning}), and policy-level (PL) methods that summarize the agent's behavior as a whole.
Our work is a policy-level explanation~\citep{zahavy2016graying,topin2019generation,sreedharan2020tldr}, and specifically, our nTCR is a technique that extracts abstract states, as we reduce the dimensionality of the states, actions, and reward spaces into high-level variables. %
A notable causal approach to XRL is done by~\citet{madumal2020explainable}, who introduce action influence models based on SCMs to generate contrastive explanations for agent behavior through counterfactual analysis, though their approach differs from ours in that they require a predefined causal graph structure and focus on explaining individual actions rather than identifying high-level causal patterns across episodes.

\looseness-1
Our approach is an extension of linear TCR \citep{kekic2024targeted} 
which is itself grounded in theoretical work on causal abstraction~\cite{rubenstein2017causal,massidda2023causal,beckers2019abstracting,beckers2020approximate,rischel2021compositional}. 
While these works establish formal conditions for valid causal abstractions, relatively few have addressed the challenge of learning such abstractions from low-level systems.
\citet{geiger2021causal,geiger2023causal} focus on language models, where high-level variables and interpretations are already available and can be used to constrain a neural network to align with a desired high-level causal structure.
Our approach takes the opposite direction: we develop a general approach to building the high-level abstraction from the ground up.
This is conceptually related to causal feature learning~\citep{chalupka2014visual}, which partitions input and output spaces to identify discrete high-level models consistent with low-level observations and interventions that fix the value of the causes. 
In contrast, TCR differs in two key aspects: it uses shift interventions that are more natural perturbations for many natural and engineered low-level systems, and focuses on continuous rather than strictly discrete high-level representations.

\section{Discussion}
\label{sec:discussion}

In this work, we have presented nonlinear Targeted Causal Reduction (nTCR), a nonlinear dimensionality reduction framework, and have shown that it can be used to generate policy-level explanations for the behavior of RL approaches. 
We have experimentally demonstrated that nTCR can identify biases and failure modes in trained RL policies on Pendulum and a robot table tennis simulation.
\par
\looseness-1
Our theoretical analysis in \Cref{sec:nonlinear_targeted_causal_reduction} provides uniqueness and existence guarantees of non-linear exact transformations. 
However, those are limited to the class of nonlinear causal models presented in \Cref{prop:analytic_solution,prop:unique}, and extending these guarantees to more general model classes should be addressed by future work. Notably, dropping the linear and Gaussian high-level model assumptions may be needed in some applications, although it poses further interpretability challenges. %
\par 
\looseness-1
While nTCR overcomes the limitations of linear TCR by allowing for arbitrary functions as reduction maps,
one of the central challenges for methods generating explanations is balancing expressivity with interpretability.
Our Gaussian kernel-based function class (\Cref{ssec:interpretable_function_class}) represents a middle ground, offering more expressivity than linear maps while maintaining interpretability. %
More complex or less structured state/action data, such as for policies leveraging image inputs may require additional steps for representing them appropriately, \textit{e.g.}\ segmentation methods~\cite{kirillov2023segment} or saliency maps~\cite{selvaraju2017grad,greydanus2018visualizing}.
\par
\looseness-1
Finally, we use shift interventions in our theory and experiments, which naturally align with continuous action spaces in RL. 
However, there are no fundamental restrictions preventing the use of other interventions more adapted to discrete action spaces, which are common in many RL settings. %
Those could be implemented as perturbations of the continuous parameters controlling the policy function used to sample discrete actions. 

\medskip

{\small
\bibliographystyle{unsrtnat}
\bibliography{modred}
}

\newpage
\appendix

\part*{Appendix}
\addcontentsline{toc}{part}{Appendix}

\startcontents[appendices]

\section*{Contents}
\printcontents[appendices]{}{1}{\setcounter{tocdepth}{2}}
\vspace{1em}

\setcounter{figure}{0}
\renewcommand{\thefigure}{\Alph{figure}}

\setcounter{table}{0}
\renewcommand{\thetable}{\Alph{table}}

\section{Additional Background}
\subsection{Cyclic Structural Causal Models}
\label{app:cyclic_scms}

\begin{definition}[SCM (with cycles allowed)]\label{def:cyclic_SCM}
    An $n$-dimensional SCM is a triplet $\Mcal=(\mathcal{G},\mathbb{S},P_\Ub)$ consisting of:
    \begin{itemize}
        \item a joint distribution $P_\Ub$ over exogenous variables $\{U_j\}_{j\leq n}$,
        \item a directed graph $\mathcal{G}$ with $n$ vertices,
        \item a set $\mathbb{S}=\{X_j \coloneqq f_j(\textbf{Pa}_j,U_j), j=1,\dots,n\}$ of structural equations, 
        where $\textbf{Pa}_j$ are the variables indexed by the set of parents of vertex $j$ in $\mathcal{G}$,
    \end{itemize} 
    such that for almost every $\ub$, the system $\{x_j \coloneqq f_j(\textbf{pa}_j,u_j)\}$ has a unique solution $\xb=\gb(\ub)$, with $\gb$ measurable. 
\end{definition}
\looseness -1
The unique solvability condition allows to consider a very general class of SCMs by allowing cycles ($\Gcal$ may not be a DAG, while a DAG would imply unique solvability). 
Moreover, we allow hidden confounding through the potential lack of independence between the exogenous variables $\{U_j\}$.
See \citet{bongers2016foundations} for a thorough study of these models. %
Under these conditions, the distribution $P_{\Ub}$ entails a well-defined joint distribution over the endogenous variables $P(\Xb)$. 

By unique solvability, we define the mapping from exogenous to endogenous variables of this model under interventions as $M^{(\ib)}:\ub \mapsto \x$ such that $M^{(\ib)}(\Ub)\sim P_{\Mcal}^{(\ib)}(\Xb)$.

\subsection{Targeted Causal Reduction (Multiple Causes)}
\label{ssec:targeted_causal_reduction_mult}

In this paper, we focus on a single high-level cause for Targeted Causal Reduction, as this is case used to develop our RL explainability method.
Here, we present the more general TCR formulation with multiple high-level causes.%

\paragraph{TCR Framework with Multiple Causes.} The general TCR framework with multiple causes extends the single-cause version as follows:
\begin{enumerate}
\item \textbf{Target-oriented Structure}: We designate a scalar target variable $Y = \tau_0(\Xb)$ that quantifies a phenomenon of interest. The high-level model has $(n{+}1)$ endogenous variables $\{Y, Z_1, \ldots, Z_n\}$, where $Z_1, \ldots, Z_n$ are the learned high-level causes of $Y$.

\item \textbf{Parameterized High-level Models}: The high-level SCM is constrained to a class of linear additive Gaussian noise models $\{\Hcal_{\gammab}\}_{\gamma \in \Gamma}$ with parameters $\gammab$ to be learned. The exogenous variables $\{W_0, W_1, \ldots, W_n\}$ have a factorized Gaussian distribution $P_\Wb=\prod P_{W_k}$.

\item \textbf{Constructive Transformations}: The linear maps $\tau$ and $\omega$ are constructive, meaning each dimension depends only on a designated subset of low-level variables:
\begin{align}
    \tau &= (\tau_0, \tau_1, \ldots, \tau_n) \text{ with } \tau_k: \mathbf{x} \mapsto \bar{\tau}_k(\mathbf{x}_{\pi(k)}) \\
    \omega &= (\omega_0, \omega_1, \ldots, \omega_n) \text{ with } \omega_k: \mathbf{i} \mapsto \bar{\omega}_k(\mathbf{i}_{\pi(k)}) \,.
\end{align}
Here, the alignment function $\pi$ partitions the $N$ low-level variables into non-overlapping subsets.
Specifically, $\pi(0)$ identifies indices of low-level variables that determine the target $Y$, while $\pi(1),\ldots,\pi(n)$ identify indices for each of the high-level causes.
These subsets satisfy $\pi(k) \cap \pi(l) = \emptyset$ for all $k \neq l$, ensuring each low-level variable contributes to at most one high-level variable.
\end{enumerate}

Our non-linear TCR framework may in principle be extended to this multiple cause setting, but it would require the introduction of extra regularization term in the optimized objective in order to constrain the method to separate distinct causes, as elaborated in \citep{kekic2024targeted}.

\subsection{KL Divergence and Information Geometry}
\label{app:KL}

The Kullback-Leibler divergence can be used as a pseudo-metric in the space of probability densities with common support over a given space that we take to be $\RR[d]$ for simplicity. 
By definition for two densities $p,q$ over the same support
\[
D_{KL}(p\|q)=\int p(x) \log \frac{p(x)}{q(x)}dx\,.
\]

It turns out this can be used to define projections on specific manifolds of probability densities. In particular, one can define the projection of density $p$ on the manifold of Gaussian densities $\Mcal_G$.
\[
Proj_G(p)=\Ncal(\mub(p),\mbox{\textbf{Var}}(p))
\]
where $\mub(p)$ and $\mbox{\textbf{Var}}(p)$ are the mean and covariance matrix associated to density $p$.

It is easy to check that $Proj_G(p)$ is the closest point to $p$ on the manifold of Gaussian densities in the sense that
\[
D_{KL}(p,Proj_G(p))=\min_{q\in \Mcal_G} D_{KL}(p,q)
\]
This moreover leads to a property analogous to the Pythagorean theorem in Euclidean spaces: for any Gaussian density $q$ and any (possibly non-Gaussian) density $q$ we have the decomposition  (see \cite{cardoso2003dependence} for details)
\[
D_{KL}(p,q)=D_{KL}(p,Proj_G(p))+D_{KL}(Proj_G(p),q)\,.
\]
While the second term $D_{KL}(Proj_G(p),q)$ has a classic analytic expression that is optimized in linear TCR \citep{kekic2024targeted}, the first term can be written
\[
D_{KL}(p,Proj_G(p))=H(Proj_G(p))-H(p)
\]
where $H(q)=-\int p(x)\log(p(x))dx $ denotes the entropy associated to density $q$. Using invariance of the KL divergence by parameter transformation, we also have
\[
D_{KL}(p,Proj_G(p))=D_{KL}(p_{norm},\Ncal(0,I_d))=H(\Ncal(0,I_d))-H(p_{norm})\,,
\]
where $p_{norm}$ denotes the density obtain by normalizing the first and second order statistics of $p$ (by removing the mean and dividing by the standard deviation), and $I_d$ denotes the identity matrix for the dimension $d$ of the considered space.
This term is called negentropy, as it is up to an additive constant the opposite of the entropy, with the additional specificity that it vanishes when the density is Gaussian (hence with maximum entropy).

The entropy is very challenging to estimate for arbitrary distributions. In contrast, the KL divergence between two (possibly multivariate) Gaussians has an analytic expression.
\begin{multline}
D_{KL}(\Ncal(\mub_1,\Sigma_1)\|\Ncal(\mub_2,\Sigma_2))\\=\frac{1}{2}\left[\log(\mbox{det}(\Sigma_2)/\mbox{det}(\Sigma_1))-d+\mbox{tr}(\Sigma_2^{-1}\Sigma_1)+(\mub_2-\mub_1)^\top\Sigma_2^{-1}(\mub_2-\mub_1)\right]
\end{multline}

Hence, \citep{kekic2024targeted} use a Gaussian KL-divergence approximation to optimize linear TCR, defined as
\begin{equation}
{D}_G(p,q)=D_{KL}(Proj_G(p),q)
\,.
\end{equation}
where $q$ is the density of the high-level model, and $p$ is the push-forward density of the low-level model. Because $q$ is assumed Gaussian, this discrepancy has an analytic expression which allows optimizing the loss of \Cref{eq:lcons}.
From the above development we immediately get that
\begin{equation}
{D}_G(p,q)\leq D_{KL}(p,q)=D_{KL}(p,Proj_G(p))+D_{KL}(Proj_G(p),q)
\,.
\end{equation}
and in particular, the inequality is strict if $p$ is not Gaussian. As a consequence, minimizing $D_G$ does not guarantee a priori that the KL divergence is minimized.

The resulting expression of the consistency loss for linear TCR used by \citet{kekic2024targeted} is thus
\begin{multline}\label{eq:lconsbis}
    \Lcal_\mathrm{cons}(\tau,\omega,\gamma) = \EE_{\ib\sim P_\mathbf{I}} \left[ D_G\left( \widehat{P}_{\tau}^{(\ib)}(Y,\Zb)\|P_{\Hcal}^{(\omega(\ib))}(Y,{\Zb})\right)\right]\,, \, \mbox{ with } \widehat{P}_{\tau}^{(\ib)} = \tau_{\#}\left[P_{\Lcal}^{(\ib)}\right]\,,%
\end{multline}
with the above defined $D_G$.

\section{Proofs of Main Text Results}

\subsection{Uniqueness of Exact Transformation}
\label{app:unique}
\unique*
\begin{proof}
    If there is an exact transformation, marginal probability densities of the target and its corresponding push-forward are matched for all interventions. So the expectations of the target with respect to these densities are matched across interventions, such that
\[
\EE_{\hat{P}^{(i)}} [Y]=\EE_{P^{(\omega(i))}} [Y]
\]
which means (using linearity of the high-level cause-effect mechanism)
\[
\EE_{\hat{P}^{(i)}} [\tau_0(\Xb_0)]=\alpha \EE_{P^{(\omega(i))}} [Z_1]+\EE[W_0]= \alpha \EE_{\hat{P}^{(\ib)}} [Z_1]= \alpha \EE_{P^{(\ib)}} [\tau_1(\Xb_1)]+\EE[W_0]
\]

Assuming there exist a solution achieving consistency for all $\ib$, it must therefore satisfy
    \[\alpha \EE_{P^{(\ib)}} [\tau_1 (X_1)]=\EE_{P^{(\ib)}} [\tau_0 (X_0)]\,.
    \]
    Consider an alternative solution
    \[\alpha \EE_{P^{(\ib)}} [\tilde{\tau_1} (X_1)]=\EE_{P^{(\ib)}} [\tau_0 (X_0)]\,.\]
    Then for all $\ib_{\pi(1)}\in \RR^{\# \pi(1)}$
    \[
\alpha \EE_{P^{(\ib)}} [\tilde{\tau_1} (X_1)-{\tau_1} (X_1)]=0\,.
    \]
    Assume a non-trivial model, then $\alpha\neq 0$. Using  $g=(\textbf{id}-\fb_1)^{-1}$, the mapping from exogenous to endogenous variables (restricted to nodes in $\pi(1)$), 
 we can rewrite the condition as
    \[
 \EE_{P_1(U-\ib)} [\tilde{\tau_1} (g(U))-{\tau_1} (g(U))]=\int p_1(u-\ib)\delta \tau_1(g(u))du=p_1*\delta \tau_1 \circ g (\ib)= 0\,,
    \]
    where ``$*$'' denotes the convolution operation. 
 Applying the Fourier transform we get
 \[
 \Fcal\left[ p_1*\delta \tau_1 \circ g \right](\xi)= \Fcal[p_1](\xi) \cdot \Fcal[\delta \tau_1 \circ g](\xi)= 0, \mbox{ for all } \xi \in \RR^{\# \pi(1)}
 \]
 we get that if the Fourier transform of $p$ does not vanish, then the Fourier transform of $\delta \tau_1 \circ g$ must vanish everywhere, leading to  $\tau_1 \circ g=0$. Assuming $g$ invertible, we get $\delta \tau_1=0$ so the reductions are identical.

\end{proof}

\subsection{Analytical Solution}
\label{app:anasol}
\anasol*

\begin{proof}
	Because of the choice of $\tau_0$, we have 
	\[
	Y = \ab^\top B\left(\Xb_1-\fb_1(\Xb_1)\right) + \ab^\top \Ub_0 = \ab^\top B\left(\Ub_1+\ib_{\pi(1)}\right) + \ab^\top \Ub_0\,.
	\]
    Such that for some constant ``Cst.'' independent of $\ib_{\pi(1)}$
    \[
    \EE[Y]=Cst. + \ab^\top B \ib_{\pi(1)}\,.
    \]
    Moreover, the high-level model's interventional marginal distribution of $Y$ is given by
    \[
	Y = \alpha Z +W_0= \tau_1(\Xb_{\pi(1)}^{(0)})+\alpha\omega_1(\ib_{\pi(1)})+W_0= \alpha \tau_1((\textbf{id}-\fb_1)^{-1}(\Ub_1))+\alpha\omega_1(\ib_{\pi(1)})+W_0\,,
	\]
	with $W_0$ an exogenous variable independent of $\tau_1(\Xb_1)$, where we use the map ``$(\textbf{id}-\fb_1)^{-1}$'' from exogenous to endogenous variables of the low-level model restricted to nodes in $\pi(1)$. Note that ``$(\textbf{id}-\fb_1)$'' is invertible by directed acyclic assumption on the graph. If we would allow cycles in the graph, this condtion would be obtained by the unique solvability assumption of \citep{bongers2016foundations}. By matching the expectation of $Y$, this implies that up to an arbitrary additive constant $\beta$ (that depends on the choice of $\tau_1$), we have
	\[
	\alpha\omega_1(\ib_1)=\beta+\ab^\top B \ib_{\pi(1)} \,.
	\]
    and we fix $\alpha=1$ to remove one degree of freedom (because $\tau_1$ and therefore $Z$ can be rescaled accordingly). 
    Now by cause consistency we get that the pushforward distribution associated to
    \[
    \tau_1(\Xb_{\pi(1)}^{(\ib)})=\tau_1((\textbf{id}-\fb_1)^{-1}(\Ub_1+\ib_{\pi(1)}))\,,
    \]
    matches the shifted high level distribution
    \[
    W_1+\beta+\ab^\top B \ib_{\pi(1)} \,.
    \]
    If we reparametrize using $\tau_1(.)=\gamma_1\circ (\textbf{id}-\fb_1)$ we get 
      \[
    \gamma_1(\Ub_{\pi(1)}+\ib_{\pi(1)})=W_1+\beta+\ab^\top B \ib_{\pi(1)} 
    \]
   A Gaussian high-level cause satisfying consistency can thus be achieved by using $\gamma_1:\ub\mapsto \beta+\ab^\top B \ub$ (because a linear combination of jointly Gaussian random variables is Gaussian, and distribution shifts due to interventions are consistent). Such that we obtain
    \[
    \tau_1:\xb \mapsto  \beta+\ab^\top B (\xb_{\pi(1)}-\fb_1(\xb_{\pi(1)}))
    \]
	This is also a sufficient condition to have a valid exact transformation given the Gaussian assumptions. 
	
\end{proof}

\section{Additional Method Details}

\subsection{Differentiable Normality Regularization}
\label{app:differentiable_normality_regularization}

In \Cref{ssec:normality_regularization}, we introduced a normality regularization term that encourages the high-level cause distribution to be Gaussian by measuring the Wasserstein distance between the standardized pushforward distribution $\widehat{P}_{\tau,\text{std}}^{(\mathbf{i})}(Z)$ and the standard normal distribution $\mathcal{N}(0,1)$.
To implement this regularization in a way that allows end-to-end gradient-based optimization, we need a differentiable approximation of both the empirical cumulative distribution function (CDF) and the distance measure.

Our approach consists of three main steps:

\paragraph{Standardization.} Given mini-batches of samples from the pushforward distribution of the high-level cause, we first standardize these samples to have zero mean and unit variance:
\begin{equation}
z_{\text{std}} = \frac{z - \mu_z}{\sigma_z}
\end{equation}
where $\mu_z$ and $\sigma_z$ are the empirical mean and standard deviation of the samples.

\paragraph{Differentiable CDF approximation.} For a differentiable approximation of the empirical CDF, we add smooth step functions of the samples:
\begin{equation}
\hat{F}(x) = \frac{1}{n} \sum_{i=1}^{n} \sigma((x - z_i) \cdot s)
\end{equation}
where $\sigma(t) = \frac{1}{1+e^{-t}}$ is the sigmoid function serving as a smooth version of the step function, $z_i$ are the standardized samples, $n$ is the number of samples, and $s$ is a smoothing factor that controls the sharpness of the transitions in the CDF approximation. 
Higher values of $s$ create sharper transitions that better approximate the true step function but may lead to sharper gradients.

\paragraph{Wasserstein distance approximation.} We approximate the 1-Wasserstein distance between distributions by computing the $L_1$ distance between their CDFs at a finite set of evaluation points $\{x_1, x_2, \ldots, x_m\}$ uniformly spaced in the interval $[-4, 4]$ (covering most of the probability mass of a standard normal distribution):
\begin{equation}
\hat{W}_1 \approx \frac{1}{m} \sum_{j=1}^{m} |\hat{F}(x_j) - \Phi(x_j)|
\end{equation}
where $\Phi$ is the CDF of the standard normal distribution, computed using the error function: $\Phi(x) = \frac{1}{2}(1 + \text{erf}(x/\sqrt{2}))$.

This approximation gives us a fully differentiable loss function that effectively measures how closely the distribution of our high-level cause follows a Gaussian distribution.
In practice, we use the same number of evaluation points as we have samples in our mini-batch, and we set the smoothing factor $s=10.0$, which provides a good balance between approximation accuracy and gradient stability.

\section{Additional Theoretical Results}
\label{app:suppl_theory}
As explained in App.~\ref{app:KL}, the consistency loss $\Lcal_{cons}$ used in linear TCR~\citep{kekic2024targeted} is only a KL divergence between the Gaussian approximation of the push-forward distribution of the low-level model and the (Gaussian) high-level model. 
Therefore, it does not explicitly enforce a perfect match between distributions, which is the theoretical requirement of an exact transformation according to \Cref{eq:exact_transformation}.

In principle, however, the setting of \Cref{prop:unique} does provide additional constraints that theoretically allow identifiability when $\Lcal_{cons}$ vanishes, as shown below.

\begin{restatable}[Identifiability using $D_G$]{proposition}{gkl_ident}\label{prop:gkl_ident}
    We assume that the prior $P_{\textbf{I}}$ over interventions has a strictly positive density with respect to the Lebesgue measure. Additionally, we assume the setting of Prop.~\ref{prop:unique}: (i) the low-level model is an additive noise SCM of the form
    \begin{equation}
\begin{bmatrix}
    \Xb_{\pi(1)},\\
    \Xb_{\pi(0)}
\end{bmatrix}\coloneqq
\begin{bmatrix}
    \fb_1(\Xb_{\pi(1)})+\Ub_{\pi(1)} +\ib_{\pi(1)} \,,\\
    \fb_0(\Xb_{\pi(0)},\Xb_{\pi(1)})+\Ub_{\pi(0)}\,\,
\end{bmatrix}\,,\quad \Ub_{\pi(1)} \sim P_1 \mbox{ indep.\ of 
  }\ \  \Ub_{\pi(0)} \sim P_0\,,
\label{eq:analytic_solution_scm}
\end{equation}
(ii) $P_1$ admits a probability density with non-vanishing Fourier transform, (iii) The high-level model has a non-zero causal effect ($\alpha\neq 0$). Then, if there exists a constructive transformation that makes $\Lcal_\mathrm{cons}$ vanish, it is also unique up to multiplicative and additive constants. 
\end{restatable}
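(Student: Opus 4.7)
The strategy is to reduce Prop.~\ref{prop:gkl_ident} to (the proof of) Prop.~\ref{prop:unique} by observing that the Fourier-theoretic uniqueness argument used there only exploits mean-matching of the target $Y$ across interventions, not full distributional equality. Since the high-level model $P_{\mathcal{H}}^{(\omega(\ib))}$ is Gaussian by assumption, the vanishing of $D_G$ already encodes mean-matching, so the proof of Prop.~\ref{prop:unique} will transfer with essentially no modifications.

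The plan is as follows. First, since $D_G$ is non-negative and $P_{\textbf{I}}$ admits a strictly positive Lebesgue density, $\mathcal{L}_{\mathrm{cons}} = 0$ forces $D_G\bigl(\widehat{P}_\tau^{(\ib)},\, P_{\mathcal{H}}^{(\omega(\ib))}\bigr) = 0$ for Lebesgue-a.e.\ $\ib \in \mathbb{R}^{\#\pi(1)}$. Second, using the identity $D_G(p,q) = D_{KL}(\mathrm{Proj}_G(p),q)$ from App.~\ref{app:KL} and the fact that $P_{\mathcal{H}}^{(\omega(\ib))}$ is Gaussian, $D_G = 0$ is equivalent to $\mathrm{Proj}_G(\widehat{P}_\tau^{(\ib)}) = P_{\mathcal{H}}^{(\omega(\ib))}$, which is in turn equivalent to the first two moments of $\widehat{P}_\tau^{(\ib)}$ coinciding with those of $P_{\mathcal{H}}^{(\omega(\ib))}$. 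In particular, the marginal mean of $Y$ matches for a.e.\ $\ib$, yielding
\begin{equation}
    \EE_{P_{\Lcal}^{(\ib)}}\bigl[\tau_0(\Xb_{\pi(0)})\bigr] \;=\; \alpha\, \EE_{P_{\Lcal}^{(\ib)}}\bigl[\tau_1(\Xb_{\pi(1)})\bigr] + \EE[W_0]\,.
\end{equation}

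Third, under mild regularity (measurability plus local integrability of $\tau_0,\tau_1$ against the additive-noise likelihood), the map $\ib \mapsto \EE_{P_{\Lcal}^{(\ib)}}[\tau_k(\Xb_{\pi(k)})]$ is continuous by dominated convergence applied to the shift-indexed densities $p_1(\ub - \ib)$. Continuity upgrades the almost-everywhere identity above to an identity for all $\ib \in \mathbb{R}^{\#\pi(1)}$. Fourth, the mean-matching identity is now exactly the assumption used in the proof of Prop.~\ref{prop:unique}: an alternative solution $\tilde{\tau}_1$ yields $\EE_{P_{\Lcal}^{(\ib)}}[\delta\tau_1(\Xb_{\pi(1)})] = 0$ for all $\ib$, which after substitution $g = (\mathbf{id}-\fb_1)^{-1}$ reads $(p_1 * (\delta\tau_1 \circ g))(\ib) = 0$. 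The Fourier transform factorizes, and since $\Fcal[p_1]$ does not vanish, we conclude $\delta\tau_1 \circ g \equiv 0$, hence $\delta\tau_1 \equiv 0$ (up to affine redundancy absorbed by rescaling $\alpha$ and shifting by a constant).

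The main obstacle I anticipate is the passage from a.e.\ equality (Step~1) to everywhere equality (Step~3). Without at least a measurability-plus-integrability assumption on the reduction class, one could modify $\tau_1$ on a Lebesgue-null set without altering $\mathcal{L}_{\mathrm{cons}}$, which would only give uniqueness modulo Lebesgue-null perturbations; fortunately, the interpretable Gaussian-kernel class of \Cref{ssec:interpretable_function_class} trivially satisfies the required continuity, so this caveat is a statement about the admissible function class rather than a genuine gap. A secondary subtlety is that $D_G = 0$ gives both mean and covariance matching, but only the mean condition is needed for uniqueness; the covariance condition will be automatically satisfied by the identified solution and need not be separately exploited.
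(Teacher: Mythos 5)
Your proposal is correct and follows essentially the same route as the paper: the paper likewise notes that vanishing $\Lcal_\mathrm{cons}$ together with the strictly positive intervention density forces $D_G$ to vanish for every $\ib$ (the paper handles the a.e.-to-everywhere step via a continuity remark in a footnote, which your dominated-convergence argument makes explicit), and then observes that the proof of Prop.~\ref{prop:unique} only ever uses matching of interventional expectations, so the same convolution/Fourier argument yields uniqueness up to affine constants. Your added care about the admissible function class and the unused covariance constraint are reasonable refinements but do not change the structure of the argument.
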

\begin{proof}
    We notice that the proof of \Cref{prop:unique} only exploits equality between the expectations of $Z$ or $Y$ obtained under, on the one hand, the high-level model distribution and, on the other hand, the push-forward low-level distribution (for all possible interventions). Since we assume that $\Lcal_\mathrm{cons}$ vanishes, then  the discrepancy $D_G$ must vanish for all $\ib$\footnote{by continuity in $\ib$ and because its probability density is strictly positive, if the discrepancy was non-zero at any point it would lead to a strictly positive expectation, thus a strictly positive $\Lcal_\mathrm{cons}$.} and therefore there is a match between the interventional expectations of $\tau_1(\Xb)$ for all $\ib$, and therefore $\tau_1$ is identified following the same argumentation. 
\end{proof}

\paragraph{Potential Limitations to Identifiability based on $\Lcal_\mathrm{cons}$.}

However, the above theoretical identifiability guaranty requires optimizing with respect to an expectation that integrates over all possible values of the intervention vector, which cannot be evaluated in practice, as we necessarily sample from a finite number of interventions. In contrast, linear TCR has theoretical identifiability guaranties under a finite number of interventions \citep{kekic2024targeted}. While we could not derive an equivalent result for nonlinear TCR, we conjecture instead a negative result for the loss $\Lcal_\mathrm{cons}$ of linear TCR that illustrates the practical difficulties of the optimization of $\Lcal_\mathrm{cons}$.

\begin{conjecture}\label{prop:gkl_nident}
    We assume that the prior $P_{\textbf{I}}$ has a density that vanishes on an open subset of its domain. Additionally, we assume the setting of Prop.~\ref{prop:unique}: (i) the low-level model is an additive noise SCM of the form
    \begin{equation}
\begin{bmatrix}
    \Xb_{\pi(1)},\\
    \Xb_{\pi(0)}
\end{bmatrix}\coloneqq
\begin{bmatrix}
    \fb_1(\Xb_{\pi(1)})+\Ub_{\pi(1)} +\ib_{\pi(1)} \,,\\
    \fb_0(\Xb_{\pi(0)},\Xb_{\pi(1)})+\Ub_{\pi(0)}\,\,
\end{bmatrix}\,,\quad \Ub_{\pi(1)} \sim P_1 \mbox{ indep.\ of 
  }\ \  \Ub_{\pi(0)} \sim P_0\,,
\label{eq:analytic_solution_scm}
\end{equation}
(ii) $P_1$ admits a probability density with non-vanishing Fourier transform, (iii) The high-level model has a non-zero causal effect ($\alpha\neq 0$). Then, if there exists a constructive transformation that makes $\Lcal_\mathrm{cons}$ vanish, there are infinitely many other constructive transformations that also make $\Lcal_\mathrm{cons}$ vanish. 
\end{conjecture}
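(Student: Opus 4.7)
}

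The plan is to exploit the fact that $\Lcal_{\mathrm{cons}}$ is an expectation against $p_{\mathbf{I}}$, so its value depends only on the integrand restricted to the support of $P_{\mathbf{I}}$. Let $U\subset \RR^{\#\pi(1)}$ denote the non-empty open set on which the density of $P_{\mathbf{I}}$ vanishes. The strategy is to start from the assumed exact transformation $(\tau^\star,\omega^\star,\gamma^\star)$ and perturb only $\bar\omega_1^\star$ on $U$, leaving $\tau^\star$, $\gamma^\star$, and the values of $\bar\omega_1^\star$ on $\RR^{\#\pi(1)}\setminus U$ unchanged. Since neither the push-forward $\widehat{P}_\tau^{(\ib)}$ nor the high-level target $P_{\Hcal}^{(\omega(\ib))}$ are affected when $\ib$ lies outside $U$, every such perturbation yields another transformation with $\Lcal_{\mathrm{cons}}=0$.

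Concretely, the steps are as follows. First, pick any non-zero smooth function $\psi:\RR^{\#\pi(1)}\to\RR$ whose support is contained in $U$; the existence of such $\psi$ follows from $U$ being open and non-empty, via standard bump function constructions. Second, define a one-parameter family of candidates by
\begin{equation}
    \tilde\omega_1^{(c)}(\ib_{\pi(1)}) \;=\; \bar\omega_1^\star(\ib_{\pi(1)}) \;+\; c\,\psi(\ib_{\pi(1)})\,,\qquad c\in\RR\,,
\end{equation}
and keep $\tau^\star$ and $\gamma^\star$ untouched. Third, verify that each member is still a constructive transformation in the sense of Def.~\ref{def:TCR}: $\tilde\omega_1^{(c)}$ depends only on $\ib_{\pi(1)}$, and the first component of $\omega$ remains zero, so no structural requirement is violated. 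Fourth, check invariance of the loss: for every $\ib$ with $p_{\mathbf{I}}(\ib)>0$ we have $\ib_{\pi(1)}\notin U$, hence $\psi(\ib_{\pi(1)})=0$ and $\tilde\omega_1^{(c)}(\ib_{\pi(1)})=\bar\omega_1^\star(\ib_{\pi(1)})$, so the integrand of $\Lcal_{\mathrm{cons}}$ at $\ib$ is identical to that of the exact transformation; integrating against $p_{\mathbf{I}}$ yields $\Lcal_{\mathrm{cons}}(\tau^\star,\tilde\omega^{(c)},\gamma^\star)=0$. Fifth, argue distinctness: since $\psi\not\equiv 0$ somewhere in $U$, the maps $\tilde\omega_1^{(c)}$ for distinct values of $c$ differ as functions, giving an uncountable family of pairwise distinct solutions, which is enough to contradict identifiability even modulo multiplicative or additive constants (the family is not of the form $c\mapsto a\,\omega_1^\star + b$).

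The main obstacle is mostly conceptual rather than technical: one must make sure the notion of ``constructive transformation'' does not silently impose a regularity condition (such as analyticity, linearity, or a global parametric form) that would rule out the bump perturbation $\psi$. As stated in Def.~\ref{def:TCR}, only the variable dependence through $\pi(1)$ is prescribed, which the construction preserves; any additional regularity desired on $\tilde\omega_1^{(c)}$ (continuity, smoothness) is inherited automatically from $\bar\omega_1^\star$ and the smoothness of $\psi$. A secondary, more interesting direction~-~perturbing $\tau_1$ rather than $\omega_1$~-~would replace the trivial cancellation above by the requirement that $p_1 \ast (\delta\tau_1\circ g)$ vanish on the support of $P_{\mathbf{I}}$ (rather than on all of $\RR^{\#\pi(1)}$ as in the proof of Prop.~\ref{prop:unique}); solving this restricted convolution equation is a deconvolution problem whose ill-posedness when $p_1$ is Gaussian makes it genuinely delicate, but it is not needed to establish the conjectured non-identifiability.
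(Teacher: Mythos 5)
Your construction is formally valid for the literal wording of the statement: because $\Lcal_{\mathrm{cons}}$ is an expectation under $P_{\mathbf{I}}$, adding to $\bar\omega_1$ a bump function supported in the open set where the density of $P_{\mathbf{I}}$ vanishes changes the integrand only on a $P_{\mathbf{I}}$-null set, and Def.~\ref{def:TCR} imposes no regularity on $\bar\omega_1$ that would exclude this. But be aware that this resolves only a degenerate reading of the conjecture, and it is not the route the paper has in mind (the paper offers no proof, only a heuristic justification, which is why the statement is labeled a conjecture). Two signs of the mismatch: (a) the literal claim ``infinitely many transformations make $\Lcal_{\mathrm{cons}}$ vanish'' is already trivially true with no assumption on $P_{\mathbf{I}}$, via the multiplicative/additive rescaling freedom acknowledged in Prop.~\ref{prop:unique}; your family, while not affine, exhibits non-uniqueness only where interventions are never sampled, i.e.\ a non-uniqueness invisible to the objective and removable by quotienting by $P_{\mathbf{I}}$-a.e.\ equality of $\omega$; (b) your argument never uses hypotheses (ii) (non-vanishing Fourier transform of $p_1$) and (iii) ($\alpha\neq 0$), which are exactly the ingredients the intended argument needs.

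The paper's justification perturbs $\tau_1$, not $\omega_1$: with $g=(\mathbf{id}-\fb_1)^{-1}$ it seeks $\delta\tau_1$ such that $p_1 * (\delta\tau_1\circ g)$ vanishes on the support of $P_{\mathbf{I}}$ while $\delta\tau_1\not\equiv 0$, built in the Fourier domain as $\Fcal\{\delta\tau_1\circ g\}=\Fcal\{k\}/\Fcal\{p_1\}$ with $k$ vanishing on the support of $P_{\mathbf{I}}$ (this is where the non-vanishing Fourier transform enters). The point is that the reduction map itself~-~the object that carries the explanation and is evaluated on data that is actually observed~-~is then genuinely different, which is the substantive threat to interpretability that motivates the conjecture. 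The step you explicitly set aside as ``not needed'' is in fact the open part: one must check that such a $\delta\tau_1$ can satisfy the first- \emph{and} second-order moment constraints implied by $\Lcal_{\mathrm{cons}}=0$ (the $D_G$ discrepancy matches means and variances) simultaneously, and the paper does not establish this. So your route buys a quick, rigorous proof of a weak version of the claim, while the intended content~-~non-identifiability of $\tau_1$ on the data distribution when the intervention prior does not have full support~-~remains unproven by your argument.
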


\textit{Justification for the Conjecture.} 

Having $\Lcal_\mathrm{cons}$ vanish only entails equality between first and second order moments of the low-level and high-level push-forward distributions lead to constraints on the convolutions of $\tau_1$ with the exogenous density $p_1$. As exemplified in the proof of \Cref{prop:unique}, using $g=(\textbf{id}-\fb_1)^{-1}$
 leads to conditions of the form
    \[
 \EE_{P(U-\ib)} [\tilde{\tau_1} (g(U))-{\tau_1} (g(U))]=\int p_1(u-\ib)\delta \tau_1(g(u))du=p_1*\delta \tau_1 \circ g (\ib)= 0
    \]
    on the support of $P_{\textbf{I}}$. So we can choose any non-vanishing function $\delta \tau_1$ such that $p_1*\delta \tau_1 \circ g$ vanishes outside the support of $P_{\textbf{I}}$ to obtain a different $\tilde{\tau}_1$ satisfying this constraint. This can be done by choosing a function in the Fourier domain of the form $h(\xi)=\Fcal\{p_1\}^{-1}(\xi)\Fcal\{k\}(\xi)$ with $k$ non-zero but vanishing on the support of $P_{\textbf{I}}$, such that $h$ is $k$ deconvolved by $p_1$. 
    Then we obtain that $p_1 * h = k$, such that $\delta \tau =k\circ g^{-1}$. Given the set of such $k$ is infinite, we conjecture that it is possible to find a class of function satisfying both the constraints on first and second order moments at the same time.

\paragraph{Practical Issues with the Optimization of $\Lcal_\mathrm{cons}$ and Enforcing Non-Gaussianity.}

Following the reasoning of the above conjecture, we speculate that optimizing $\Lcal_\mathrm{cons}$ with a highly expressive function class may lead to overfitting, in the sense that many highly nonlinear functions can be found that satisfy the first and second order moment constraints of the Gaussian approximation of the KL divergence. Enforcing an exact match between distributions instead of only first and second order moments may help avoid this behavior, \textit{e.g.}\ by avoiding the nonlinear function to generate multimodal distributions. 

While this can be done by adding to $\Lcal_\mathrm{cons}$ an addition negentropy term enforcing Gaussianity (see \Cref{app:KL}), entropy is very challenging to estimate from finite samples and use as a differentiable objective. Instead, we resort to the differentiable measure of non-Gaussianity introduced in \Cref{ssec:normality_regularization} based on a Wasserstein distance (see App.~\ref{app:differentiable_normality_regularization}) that proved stable in our experiments, and is known to have numerical benefits over the KL divergence. 

Experiments using this measure as a regularizer are shown in App.~\ref{app:effect_normreg} and are compatible with our theoretical insights, in the sense that it improve performance when the sampling of the intervention does not cover a broad enough range of the exogenous distribution.

\section{Experimental Details}
\label{app:experimental_details}

\paragraph{Data.}
For each experiment we sample $N_\mathrm{int}$ distinct interventions and for each distinct intervention, we generate $N_\mathrm{ep}$ RL episodes that apply that intervention.
For the synthetic experiments, we equivalently sample $N_\mathrm{ep}$ SCM samples that apply the same intervention.
That is, we keep the intervention fixed but re-sample the exogenous noise.
For all datasets, we use $80\%$ for training and $10\%$ for validation and test, respectively.

\begin{figure}[htb]
    \centering
    \begin{subfigure}[b]{0.48\textwidth}
        \centering
        \includegraphics[width=\textwidth]{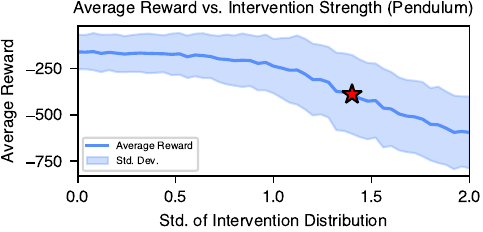}
        \caption{\textbf{Pendulum Task}}
        \label{fig:pendulum_int_strength}
    \end{subfigure}
    \hfill
    \begin{subfigure}[b]{0.48\textwidth}
        \centering
        \includegraphics[width=\textwidth]{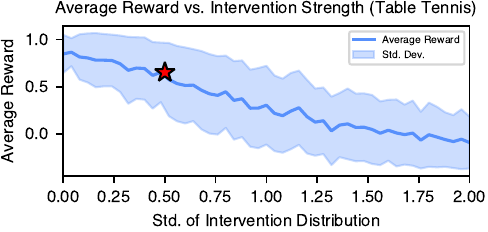}
        \caption{\textbf{Table Tennis Task}}
        \label{fig:table_tennis_int_strength}
    \end{subfigure}
    \caption{
    \textbf{Relationship Between Intervention Strength and Task Performance.}
    These plots illustrate how the average episode reward varies as a function of the intervention strength $\sigma$ (standard deviation of the Gaussian perturbations $\delta A_t \sim \mathcal{N}(0, \sigma^2)$) for both RL tasks.
    The shaded regions represent $\pm 1$ standard deviation of the reward distribution across episodes.
    The star markers indicate our selected intervention strengths, which were chosen at the threshold where increased perturbation begins to significantly degrade policy performance.
    This selection balances between introducing sufficient causal signal for the reduction algorithm and preserving the policy's core behavior patterns.
    }
    \label{fig:intervention_strength}
\end{figure}

\subsection{Synthetic Experiments}
\label{app:synthetic_experiments}

\paragraph{Nonlinear Causal Model for Synthetic Data.}
For our synthetic experiments, we implement a class of nonlinear causal models that follow the structure described in \Cref{prop:analytic_solution}. 
That is, we sample SCMs with the structure given in \Cref{eq:analytic_solution_scm}, where, $|\pi(0)|=1$ and  $|\pi(1)|=9$, , creating a 10-dimensional system (9 low-level cause variables plus 1 target variable).
We set $\mathbf{h}_0(\mathbf{X}_0) \equiv 0$ and $\mathbf{a}=1$.
The matrix $B \in \mathbb{R}^{1 \times 9}$ has entries sampled from a standard normal distribution, providing the linear mapping from causes to effect as required by \Cref{prop:analytic_solution}.
The exogenous noise variables $\mathbf{U}_{\pi(1)}$ and $\mathbf{U}_{\pi(0)}$ are sampled from a normal distribution with zero mean and standard deviation 1.0.
The nonlinear function $\mathbf{f}_1$ consists of component-wise nonlinear mechanisms represented by randomly initialized two-layer neural networks with 10 hidden units and $\tanh$ activation functions.

\looseness-1
\paragraph{Intervention Generation.}
For each experiment, we generate interventions by sampling from a standard normal distribution, creating shift interventions $\ib_{\pi(1)} \sim \Ncal(0, \mathbbm{1} )$.
We generate $N_\mathrm{int}=10^7$ distinct interventions, and for each intervention, we sample $N_\mathrm{ep}=1024$ data points from the causal model. %

\paragraph{Neural Network Architecture for nTCR.}
For these synthetic data experiments, the reduction maps $\tau_1$ and $\omega_1$ are parameterized using neural networks with residual connections.
Each network consists of a first linear layer that maps the input dimension to a hidden dimension of 256, followed by 8 hidden layers with Softplus activation functions and residual connections.
The final layer maps the hidden representation to the high-level cause/intervention.
This architecture with residual connections helps in learning complex functions while maintaining gradient flow during training.
This amounts to approximately $1.2 \cdot 10^7$ parameters for $\tau_1$, $\omega_1$ and the high-level causal model combined.
Unlike our RL experiments, we do not use the Gaussian kernel representation described in \Cref{ssec:interpretable_function_class} since we are only interested in verifying the identification properties of our approach, not interpretability.

\paragraph{nTCR Training.}
We train our nTCR model using the Adam optimizer with an initial learning rate of $5 \cdot 10^{-4}$ and a cosine annealing learning rate scheduler that gradually reduces the learning rate to zero over the course of training.
For the normality regularization, we set $\eta_{\text{norm}} = 1.0$ to encourage the learned high-level cause to follow a Gaussian distribution.
We use a batch size of 512, meaning that each training step processes 512 distinct interventions sampled from our prior distribution $P(\mathbf{i})$.

\paragraph{Identification Metric.}
To quantify how well our learned reduction maps align with the ground truth solutions from \Cref{prop:analytic_solution}, we employ a normalized, debiased L2 loss. Let $z^{\text{learned}}_i = \tau_1^{\text{learned}}(X_i)$ be the high-level cause computed using our learned reduction, and $z^{\text{gt}}_i = \tau_1^{\text{gt}}(X_i)$ be the corresponding ground truth value for sample $i$. We first standardize both sets of samples:
\begin{align}
\tilde{z}^{\text{learned}}_i &= \frac{z^{\text{learned}}_i - \overline{z^{\text{learned}}}}{\sqrt{\sum_j (z^{\text{learned}}_j - \overline{z^{\text{learned}}})^2}} \\
\tilde{z}^{\text{gt}}_i &= \frac{z^{\text{gt}}_i - \overline{z^{\text{gt}}}}{\sqrt{\sum_j (z^{\text{gt}}_j - \overline{z^{\text{gt}}})^2}}
\end{align}
where $\overline{z^{\text{learned}}}$ and $\overline{z^{\text{gt}}}$ are the respective sample means. We then find the optimal rescaling coefficient $c$ that minimizes the L2 distance:
\begin{align}
c = \arg\min_c \sum_i (c \cdot \tilde{z}^{\text{learned}}_i - \tilde{z}^{\text{gt}}_i)^2 = \frac{\sum_i \tilde{z}^{\text{learned}}_i \cdot \tilde{z}^{\text{gt}}_i}{\sum_i (\tilde{z}^{\text{learned}}_i)^2}
\end{align}
The final identification loss is the normalized minimum L2 distance:
\begin{align}
\mathcal{L}_{\text{id}}(\tau) = \frac{1}{\sqrt{n}} \sqrt{\sum_i (c \cdot \tilde{z}^{\text{learned}}_i - \tilde{z}^{\text{gt}}_i)^2}
\end{align}
where $n$ is the number of samples. The same procedure is applied to evaluate the $\omega_1$ map. This metric accounts for the fact that the theoretical solutions are only identified up to multiplicative and additive constants, while measuring how well the learned reductions capture the structural properties of the ground truth solutions.

\paragraph{Compute Resources.}
All synthetic experiments were conducted on NVIDIA Quadro RTX 6000 GPUs with 4 CPU cores allocated per run.
Each complete training run required approximately 48 hours of computation time.
We performed 10 different runs, each with a different randomly generated causal model.

\subsection{Pendulum}
\label{app:pendulum}

\begin{figure}[htb]
    \centering
    \includegraphics[width=\textwidth]{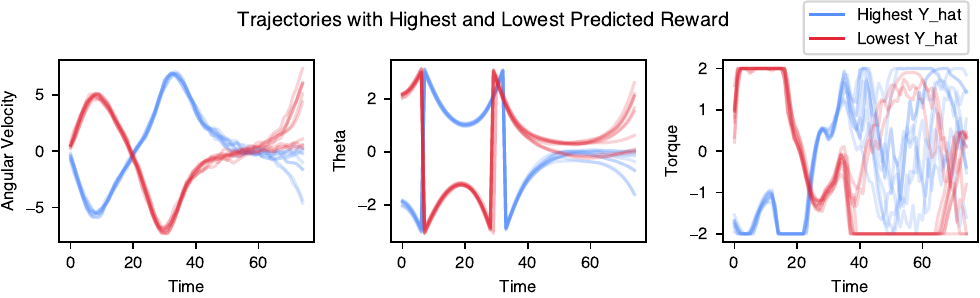}
    \caption{\looseness-1
    \textbf{Episodes with Highest and Lowest Predicted Reward.}
    The plot shows episodes from the test dataset with the highest and lowest cumulative reward predicted by nTCR for Policy A.
    The red lines display the 10 episodes with highest predicted rewards, while the blue lines show the 10 episodes with lowest predicted rewards.
    }
    \label{fig:pendulum_trajectories}
\end{figure}

Each episode in the Pendulum task~\cite{towers2024gymnasium} has a length of $T=200$ steps.
We apply interventions to the actions for the first 75 steps, and define the target variable as the cumulative reward for the remainder of the episode $Y{=}\sum_{t=76}^{200} R_t$. 
The reward of the Pendulum task is defined as $R_t = -\theta_{t-1}^2 - 0.1\,\dot{\theta}_{t-1}^2 - 0.001\,A_t^2$, which penalizes deviations from the upright position, large angular velocities, and large torque applied by the policy.

\paragraph{RL Agent Training.}

We train the policies for \cref{ssec:pendulum_experiments} with the Stable-Baselines~3~\cite{raffin2021stable} implementation of Proximal Policy Optimization~(PPO)~\cite{schulman2017proximal}.
For the Pendulum experiments, we use the default hyperparameters of Stable-Baselines~3.
The RL agent collects 1 million environment transitions, and the total wall-clock training time is $~$50 minutes on an Intel Xeon W-2245 CPU.
The training requires less than 1 GB of memory.

\paragraph{Policy Data.}
For the analysis of the Pendulum policy, we collect a dataset of $N_\mathrm{int} = 100{,}000$ interventions with $N_\mathrm{ep}=100$ episodes per intervention.
Collecting the datasets of Policy A and B in \cref{ssec:pendulum_experiments} takes about 150 CPU core-hours each on a compute cluster, with 4GB of memory per core.

\paragraph{nTCR.}

We train our nTCR model using the Adam optimizer with an initial learning rate of $0.01$ and a cosine annealing learning rate scheduler that reduces the learning rate to zero over the course of training.
We use a batch size of $64$ and train for $90$ epochs with weight decay of $0.01$.
For the normality regularization, we set $\eta_\mathrm{norm}=10$.

For our interpretable nonlinear function class (\Cref{ssec:interpretable_function_class}), we use $128$ Gaussian kernels for each variable at each time step.
Before training, we determine the minimum and maximum values $x_{\min}^j$ and $x_{\max}^j$ for each variable $j$ across all episodes.
The kernel centers $\{\mu_{j,t,k}\}_{k=1}^{128}$ are equally spaced over the range $[x_{\min}^j, x_{\max}^j]$, and the kernel widths are set as:
\begin{equation}
\sigma_{j,t} = c \cdot \frac{x_{\max}^j - x_{\min}^j}{128}
\end{equation}
where the constant $c=8$ acts as a smoothing parameter---higher values lead to smoother learned reduction functions.

For computational convenience, we pre-sample all episodes under interventions and store the data rather than simulating episodes during training, though the latter approach would also be feasible in principle.

\paragraph{Episodes with High and Low Predicted Reward.} 
\Cref{fig:pendulum_trajectories} displays episodes from the test set with the highest and lowest predicted cumulative rewards according to the learned nTCR model for Policy A.
These episodes correspond to the characteristic trajectory patterns identified in \Cref{fig:pendulum_experiment}(left): clockwise swinging motions starting from the right quadrant tend to have higher rewards than episodes with counterclockwise motion originating from the left quadrant.

\paragraph{Compute Resources for nTCR.}
All nTCR training experiments for the Pendulum task were conducted on NVIDIA Quadro RTX 6000 GPUs with 8 CPU cores allocated per run.
Each complete training run required approximately 2 hours of computation time and used roughly 4 GB of GPU memory.

\subsection{Table Tennis}
\label{app:table_tennis}

\begin{figure}[htb]
    \centering
    \includegraphics[width=\textwidth]{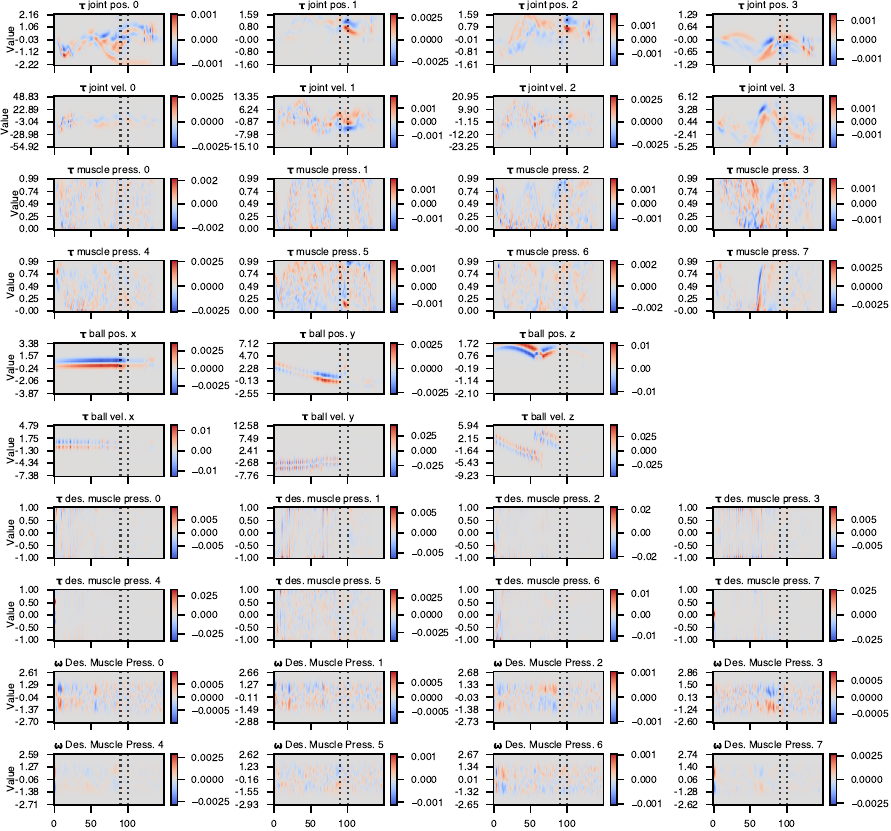}
    \caption{\looseness-1
    \textbf{nTCR Reduction Maps for Table Tennis Task.}
    Linear parameter in high-level model $0.0013$ (bias $0.5964$).
    }
    \label{fig:table_tennis_full_ntcr}
\end{figure}

In the simulated table tennis task~\cite{buchler2022learning,guist2024safe}, episodes terminate when the racket hits the ball, which typically happens between steps 90 and 100.
The environment then continues simulating the ball to determine its landing position.
If the agent does not hit the ball, the episode terminates as soon as the ball falls below the height of the table, which typically happens after 125 to 135 steps. 
Since our method requires episodes of fixed length, we pad the data to $T=150$ steps by appending zeros after episode termination.

The agent receives a non-zero reward only at the end of the episode, which depends on whether the agent managed to hit the ball.
This reward is given by

\begin{equation}
    R_t=
    \begin{cases}
        R^\mathrm{tt}\, & \text{racket touches the ball}\\
        R^\mathrm{hit}  & \text{ball below the table}\\
        0 & \text{otherwise}\,,
    \end{cases}
\label{eq:condreward*}
\end{equation}

where $R^\mathrm{tt}$ rewards the agent for hitting the ball close to the desired position on the table and $R^\mathrm{hit}$ penalizes missing the ball depending on the minimum distance between the ball and the racket.
These reward terms are defined as

\begin{align}
    R^\mathrm{tt} &= \max \left(1 - \left(\frac{\lVert \mathbf{p}^\mathrm{land} - \mathbf{p}^\mathrm{des}\rVert}{3}\right)^\frac{3}{4},\,-0.2\right) \\
    R^\mathrm{hit} &= -\min_t \lVert\mathbf{p}^\mathrm{ball}_t - \mathbf{p}^\mathrm{racket}_t\rVert_2,
\end{align}

where $\mathbf{p}^\mathrm{land}$ is the landing point of the ball on the table, $\mathbf{p}^\mathrm{des}$ is the desired landing point at the center of the opponent's side, and $\mathbf{p}^\mathrm{ball}_t$ and $\mathbf{p}^\mathrm{racket}_t$ are the 3D positions of ball and racket at timestep $t$, respectively.

\paragraph{RL Agent Training.}

Similar to the Pendulum task, we train the table tennis policies in \cref{ssec:table_tennis} with the Stable-Baselines~3~\cite{raffin2021stable} implementation of PPO~\cite{schulman2017proximal}.
We use the algorithm hyperparameters tuned by~\citet{guist2024safe}, which are displayed in \cref{tab:table_tennis_rl_hyperparameters}.
The RL agent collects 3 million environment transitions, and the total wall-clock training time is roughly 10 hours on an Intel Xeon W-2245 CPU.
Running the simulation and the RL training requires 8 GB of memory.

\setlength{\tabcolsep}{15pt}
\begin{table}[h]
\caption{Stable-Baselines 3 PPO hyperparameters used for training the table tennis policy.}
\label{tab:table_tennis_rl_hyperparameters}
\begin{center}
    \begin{tabular}{c c}
        \toprule
        Parameter name & Value \\
        \midrule
        \texttt{batch\_size} & $310$ \\
        \texttt{clip\_range} & $0.4$ \\
        \texttt{clip\_range\_vf} & $0.3$ \\
        \texttt{ent\_coef} & $7 \times 10^{-6}$ \\
        \texttt{gae\_lambda} & $1$ \\
        \texttt{gamma} & $0.9999$ \\
        \texttt{learning\_rate} & $0.00015$ \\
        \texttt{max\_grad\_norm} & $0.1$ \\
        \texttt{n\_epochs} & $34$ \\
        \texttt{n\_steps} & $5000$ \\
        \texttt{num\_hidden} & $380$ \\
        \texttt{num\_layers} & $1$ \\
        \texttt{vf\_coef} & $0.56$ \\
        \bottomrule
    \end{tabular}
\end{center}
\end{table}

\paragraph{Policy Data.}
For the table tennis analysis, collecting the dataset of $N_\mathrm{int}=100{,}000$ interventions with $N_\mathrm{ep}=100$ episodes per intervention takes about 3400 CPU core-hours, with 8GB of memory per core.

\paragraph{nTCR.}

We use similar nTCR training settings as for the Pendulum task, with the following modifications: learning rate of $0.0001$ and weight decay of $0.1$ (manually tuned for optimization stability), $32$ Gaussian kernels per variable-time pair (chosen to fit GPU memory constraints), and smoothing parameter $c=1$ (smaller than Pendulum to provide necessary resolution and avoid oversmoothing with fewer kernels).
\Cref{fig:table_tennis_full_ntcr} shows the full set of nTCR maps for the table tennis task.

\paragraph{Compute Resources for nTCR.}

All nTCR training experiments for the table tennis task were conducted on NVIDIA A100-SXM4-40GB GPUs with 8 CPU cores allocated per run.
Each complete training run required approximately 5 hours of computation time and used around 20 GB of GPU memory.

\section{Additional Experimental Results}
\label{app:additional_experimental_results}

\subsection{Effect of Normality Regularization}
\label{app:effect_normreg}

To better understand the role of our normality regularization introduced in \Cref{ssec:normality_regularization}, we conduct an ablation study on synthetic data across different noise regimes.
We vary the variance of the intervention distribution while keeping the exogenous noise levels in the low-level SCM constant, effectively creating different signal-to-noise ratios.

\begin{figure}[htb]
    \centering
    \includegraphics[width=\textwidth]{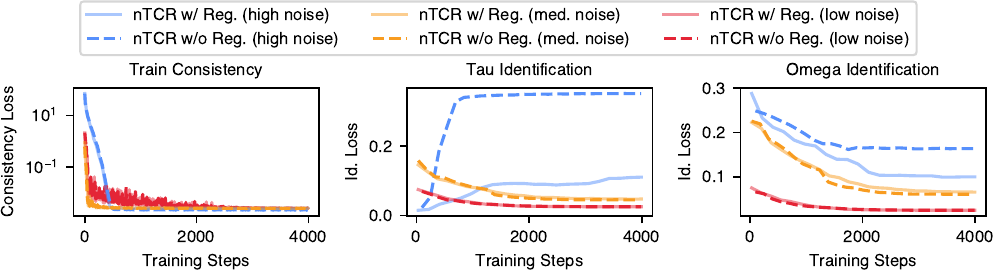}
    \caption{\looseness-1
    \textbf{Effect of Normality Regularization on Achieved Consistency and Identification of Ground-Truth Solution.}
    The figure shows the consistency loss (left) and the identification losses measuring agreement with the ground-truth solutions (definition in \Cref{app:synthetic_experiments}) for the $\tau$- and $\omega$-functions (middle and right) over the reduction training run for synthetic low-level SCMs with 3 low-level variables.
    The interventions are sampled as $\ib_{\pi(1)} \sim \Ncal(0, h \cdot \mathbbm{1} )$, with $h \in \{0.1, 1.0, 10.0\}$ for the high-, medium- and low-exogenous-noise settings, respectively.
    Note that, since we keep the exogenous noise levels in the low-level SCM constant, a low variance of the intervention distribution corresponds to high exogenous noise relative to the interventions.
    The lines for normality regularization have $\eta_\mathrm{norm}=1$; to turn off regularization we set $\eta_\mathrm{norm}=0$.
    We show average values over 10 sampled SCMs.
    }
    \label{fig:synthetic_regularization}
\end{figure}

\Cref{fig:synthetic_regularization} shows several important insights about the effect of normality regularization across different noise regimes.
First, the identifiability metrics demonstrate that learning ground truth reduction maps becomes progressively more challenging as we move from the low-noise setting (high intervention variance) to the high-noise setting (low intervention variance).
This reflects the fundamental signal-to-noise ratio: when interventions have small variance relative to the exogenous noise, the signal about the most influential factors for the target becomes harder to detect and extract.
\par
Most significantly, we observe that normality regularization has the strongest beneficial effect in the high-noise regime, where it substantially improves the identification of ground truth solutions for both $\tau$ and $\omega$ maps.
In contrast, for the low- and medium-noise settings, the regularization provides minimal additional benefit, suggesting that when the signal-to-noise ratio is favorable, the consistency loss alone is sufficient to guide the learning toward the correct solution.
\begin{figure}[htb]
    \centering
    \includegraphics[width=0.5\textwidth]{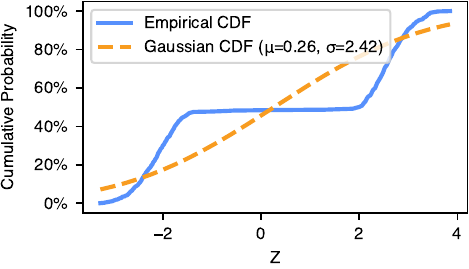}
    \caption{
    \textbf{Non-Gaussianity of the high-level cause.}
    The figure shows the empirical CDF (blue) for the high-level cause for an SCM sampled in \Cref{fig:synthetic_nongaussianity} (high noise setting) for an nTCR with $\eta_\mathrm{norm}=0$.
    The yellow dashed line shows the CDF for a Gaussian distribtuion with the same mean and variance.
    }
    \label{fig:synthetic_nongaussianity}
\end{figure}
To understand why normality regularization is particularly crucial in the high-noise regime, \Cref{fig:synthetic_nongaussianity} shows the empirical CDF of the learned high-level cause distribution in the high-noise setting without normality regularization.
The distribution shows highly non-Gaussian behavior with multiple peaks, indicating that the $\tau$ reduction is exploiting the flexibility of the nonlinear function class to minimize the consistency loss by artificially shaping the high-level pushforward distribution.
Such multi-modal distributions are undesirable for interpretability, as they introduce additional complexity that obscures the underlying causal structure (see also our discussion of Gaussianity in \Cref{app:suppl_theory}).
\par
These results demonstrate that normality regularization serves as an important inductive bias that prevents overfitting to the training data while maintaining the interpretability of the learned high-level causes. 
The regularization ensures that interventions must be sufficiently strong relative to the exogenous noise to be effectively learned~-~a principle that aligns with the fundamental requirement that causal interventions should produce detectable changes in the target phenomenon (see also \Cref{fig:intervention_strength}).

\subsection{Linear TCR for Pendulum Policies}
\begin{figure}[htb]
    \centering
    \begin{subfigure}[b]{0.48\textwidth}
        \centering
        \includegraphics[width=\textwidth]{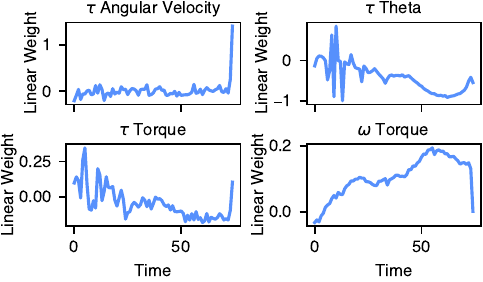}
        \caption{\textbf{Policy A}}
        \label{fig:pendulum_linear_tcr_policyA}
    \end{subfigure}
    \hfill
    \begin{subfigure}[b]{0.48\textwidth}
        \centering
        \includegraphics[width=\textwidth]{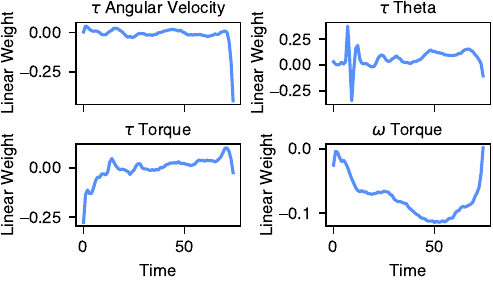}
        \caption{\textbf{Policy B}}
        \label{fig:pendulum_linear_tcr_policyB}
    \end{subfigure}
    \caption{\looseness-1
    \textbf{Linear TCR~\cite{kekic2024targeted} for Pendulum Task.}
    Linear parameters in high-level model: $5.6821$ (bias $-28.5229$) for Policy A and $94.0380$ (bias $-115.0152$) for Policy B.
    }
    \label{fig:pendulum_linear_tcr}
\end{figure}

The reduction maps for linear TCR~\cite{kekic2024targeted} for policies A and B are shown in \Cref{fig:pendulum_linear_tcr_policyA,fig:pendulum_linear_tcr_policyB}, respectively. These provide a baseline for comparison with our nonlinear approach (nTCR).

\paragraph{Policy A: Limitations of Linear TCR.} For Policy A, linear TCR identifies predominantly negative weights in the second half of the episode for the $\tau$-maps for Theta and Torque. While this correctly captures that deviations from the upright position (nonzero Theta) and large corrective actions (high Torque) late in episodes correlate with lower rewards, the linear approach fails to capture the more nuanced pattern revealed by nTCR. For the first half of the episode, the linear weights show inconsistent, unstable patterns without a clear interpretable structure. The $\tau$-map for Angular Velocity appears largely neutral except for an unexplained positive peak at the final time step.

In contrast, nTCR (as shown in \Cref{fig:pendulum_experiment}, left) clearly distinguishes between two distinct trajectory classes: clockwise swinging (from the right quadrant) versus counterclockwise swinging (from the left quadrant). This critical asymmetry in policy performance is entirely missed by the linear model, which can only capture monotonic relationships between state variables and expected reward.

\paragraph{Policy B: Observations Consistent with nTCR.} For Policy B, linear TCR shows a strongly negative $\tau$-map for Torque at the start of episodes and a relatively neutral pattern in the second half.
This suggests episodes starting with positive torque tend to have worse outcomes.
The $\omega$-map indicates shifting torque toward more negative values would be beneficial, which aligns with nTCR's findings.
These two observations are consistent with Policy B's failure mode---allowing the pendulum to tip over in the positive $\theta$ direction before applying corrective torque---can be captured by linear relationships.
In this case, both linear TCR and nTCR correctly identify the key intervention (applying more negative torque) that would improve policy performance.

\paragraph{Comparative Advantages of nTCR.} Overall, we observe that linear TCR has fundamental limitations in capturing complex behavioral patterns in RL policies:
\begin{itemize}
    \item \textbf{Non-monotonic relationships:} Linear TCR cannot capture U-shaped or other non-monotonic relationships between states and expected rewards.
    \item \textbf{Trajectory classes:} Linear TCR fails to distinguish between qualitatively different trajectory classes (like clockwise vs.\ counterclockwise motion) that are not linearly separable.
\end{itemize}

These results empirically validate the need for our nonlinear extension to TCR, demonstrating that nTCR can uncover important qualitative patterns in policy behavior that remain hidden to linear approaches.

\subsection{Linear TCR for Table Tennis Policies}
\begin{figure}[htb]
    \centering
    \includegraphics[width=\textwidth]{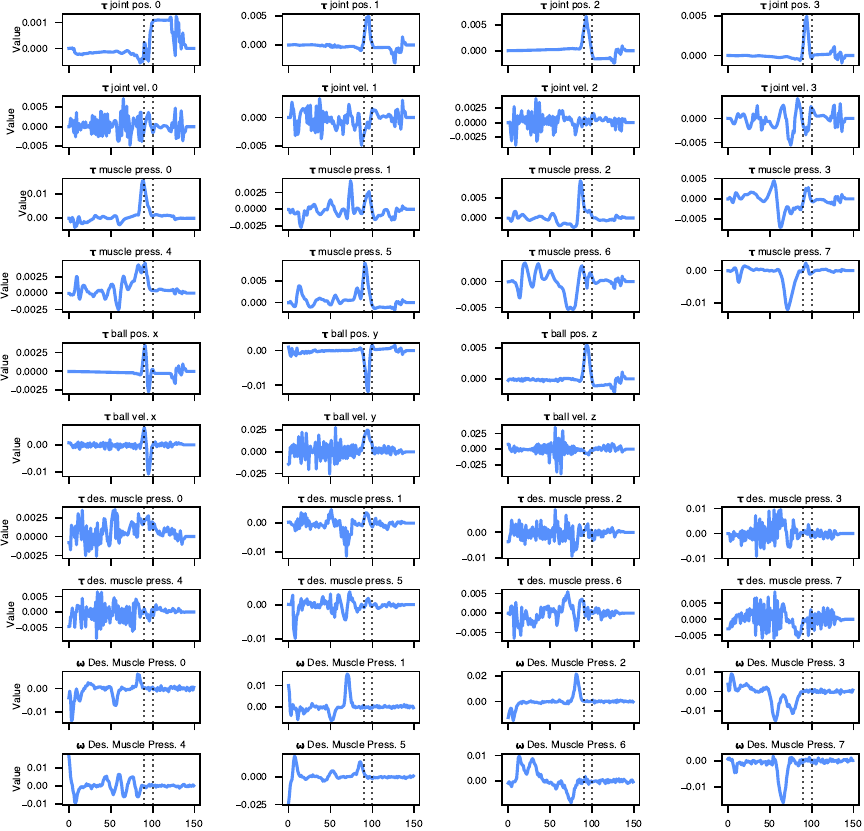}
    \caption{\looseness-1
    \textbf{Linear Reduction Maps for Table Tennis Task.}
    Linear parameter in high-level model $0.6933$ (bias $0.6780$).
    }
    \label{fig:table_tennis_linear}
\end{figure}
The reduction maps for linear TCR~\cite{kekic2024targeted} for the table tennis task are shown in \Cref{fig:table_tennis_linear}.
While linear TCR provides a computationally simpler baseline, it reveals fundamental limitations when applied to complex robotic control tasks.

\paragraph{Temporal Importance Identification.} Linear TCR successfully identifies that the most crucial periods for policy success or failure occur just before and during ball contact.
This can be observed from the largest magnitude contributions in the linear reduction maps, which appear predominantly around the time window when the robot typically hits the ball (indicated by the dotted vertical lines).
This temporal insight aligns with the physical intuition that precise timing and positioning during ball contact are critical for successful returns.

\paragraph{Limited State Representation.} However, the resolution of linear TCR is fundamentally constrained: it can only indicate whether a particular variable at a specific time contributes positively or negatively to the expected reward \emph{on average} over the entire state and action distribution observed in the dataset.
Linear reductions cannot distinguish between different values that a variable might take at any given time step.
For instance, while nTCR can identify that balls bouncing closer to the net present greater challenges for the robot (\Cref{fig:tennis_experiment}(h,i)), it would be hard to deduce such value-dependent relationships from linear TCR.
The linear approach averages over all ball positions at each time step, obscuring the specific spatial patterns that influence task difficulty.

\paragraph{Expressivity vs.\ Simplicity Trade-off.} While linear TCR offers benefits in terms of training simplicity and straightforward interpretation of the learned reductions, its limited capacity to represent state-dependent relationships severely constrains its applicability to complex control scenarios.
The table tennis task exemplifies this limitation: successful policy explanation requires understanding not just \emph{when} certain variables matter, but also \emph{which specific values} of those variables lead to success or failure. This distinction between temporal importance and state-value dependencies shows the necessity of our nonlinear extension for capturing meaningful causal patterns in more complex tasks.

\end{document}